\DeclarePairedDelimiter{\abs}{\lvert}{\rvert}
\DeclarePairedDelimiter{\norm}{\lVert}{\rVert}
\DeclareMathOperator{\Expected}{\mathbb{E}}
\DeclareMathOperator{\Unif}{unif}
\newcommand{\Reals}{\mathbb{R}}
\newcommand{\Naturals}{\mathbb{N}}
\newcommand{\Given}{\mid}
\newcommand{\Indicator}{\mathbb{I}}
\DeclarePairedDelimiterX{\DivergencePar}[2]{(}{)}{%
  #1\;\delimsize\|\;#2%
}
\renewcommand{\Pr}{\mathbb{P}}
\newcommand{\Est}[1]{\hat{#1}}
\newcommand{\Set}[1]{\mathcal{#1}}
\newcommand{\Distribution}[1]{\Delta(#1)}
\newcommand{\Tuple}[1]{\langle #1 \rangle}
\newcommand{\Model}[1]{\mathbf{#1}}          % tuples names
\newcommand{\Nonmark}[1]{\bar{#1}}           % non-Markovian functions or processes
\newcommand{\Behav}[1]{#1^\mathsf{b}}
\newcommand{\Feature}[1]{^{(#1)}}
\newcommand{\Actions}{\Set{A}}
\newcommand{\Observations}{\Set{O}}
\newcommand{\Rewards}{\Set{R}}
\newcommand{\Histories}{\Set{H}}
\newcommand{\Episodes}{\Set{E}}
\newcommand{\Qs}{\Set{Q}}            % automaton states
\newcommand{\Inputs}{\Sigma}         % automaton inputs
\newcommand{\Outputs}{\Omega}        % automaton outputs
\newcommand{\Dataset}{\Set{D}}
\newcommand{\Alphabet}{\Gamma}
\newcommand{\Language}{{X}}
\newcommand{\Languages}{\Set{X}}
\newcommand{\RDP}{\Model{R}}
\newcommand{\Process}{\Model{P}}
\newcommand{\Policy}{\pi}
\newcommand{\TransitionFn}{\tau}                % automaton functions
\newcommand{\OutputFn}{\theta}                  % automaton functions
\newcommand{\OutputFnR}{\OutputFn_\mathsf{r}}   % reward output function
\newcommand{\OutputFnO}{\OutputFn_\mathsf{o}}   % observation output function
\newcommand{\PrefL}{L_\infty^\mathsf{p}}        % prefix L inf distance
\newcommand{\PrefLOne}{L_1^\mathsf{p}}          % prefix L 1 distance
\newcommand{\StopSymbol}{o_\bot}                   % string or episode stop symbol
\newcommand{\Boollang}{\text{B}}
\newcommand{\COccupancy}{d}                     % state or state-action occupancy distribution (it was \nu before)
\newcommand{\CDistinguish}{\mu_0}               % \PrefL-distinguishability coefficient
\newcommand{\Multiset}{\mathcal{Z}}
\newcommand{\cX}{\mathcal{X}}
\newcommand{\cZ}{\mathcal{Z}}
\newcommand{\pa}[1]{\left(#1\right)}
\newcommand{\tuple}[1]{\langle #1 \rangle}
\newcommand{\emptystring}{\lambda}
\theoremstyle{plain}
\newtheorem{theorem}{Theorem}
\newtheorem{lemma}[theorem]{Lemma}
\theoremstyle{definition}
\newtheorem{definition}{Definition}
\newtheorem{example}{Example}
\newtheorem{assumption}{Assumption}
\theoremstyle{remark}
\title{Tractable Offline Learning of Regular Decision Processes}
\author{
  Ahana Deb \\
  Universitat Pompeu Fabra \\
  \texttt{ahana.deb@upf.edu} \\
  \And
  Roberto Cipollone \\
  Sapienza University of Rome \\
  \texttt{cipollone@diag.uniroma1.it} \\
  \And
  Anders Jonsson \\
  Universitat Pompeu Fabra \\
  \texttt{anders.jonsson@upf.edu} \\
  \And
  Alessandro Ronca \\
  University of Oxford \\
  \texttt{alessandro.ronca@cs.ox.ac.uk} \\
  \And
  Mohammad Sadegh Talebi \\
  University of Copenhagen \\
  \texttt{m.shahi@di.ku.dk} \\
}
\begin{document}

\maketitle

\begin{abstract}
This work studies offline Reinforcement Learning (RL) in a class of non-Markovian environments
called Regular Decision Processes (RDPs).
In RDPs, the unknown dependency of future observations and rewards from the past interactions can be captured by some hidden finite-state automaton.
For this reason, many RDP algorithms first reconstruct this unknown dependency using automata learning techniques.
In this paper, we show that it is possible to overcome two strong limitations of previous offline RL algorithms for RDPs, notably RegORL \citep{cipollone2023}.
This can be accomplished via the introduction of two original techniques:
the development of a new pseudometric based on formal languages, which removes a problematic dependency on $\PrefL$-distinguishability parameters, and the adoption of Count-Min-Sketch (CMS), instead of naive counting.
The former reduces the number of samples required in environments that are characterized by a low complexity in language-theoretic terms.
The latter alleviates the memory requirements for long planning horizons.
We derive the PAC sample complexity bounds associated to each of these techniques, and we validate the approach experimentally.
\end{abstract}

\section{Introduction}

% Why non-Markovian RL
The Markov assumption is fundamental for most Reinforcement Learning (RL) algorithms, requiring that the immediate reward and transition only depend on the last observation and action.
Thanks to this property, the computation of (near-)optimal policies involves only functions over observations and actions.
However, in complex environments, 
observations may not be complete representations of the internal environment state.
In this work, we consider RL in Non-Markovian Decision Processes (NMDPs).
In these very expressive models, the probability of future observations and rewards may depend on the entire history, which is the past interaction sequence composed of observations and actions.
%
% Why RDPs
The unrestricted dynamics of the NMDP formulation is not tractable for optimization.
Therefore, previous work in non-Markovian RL focus on tractable subclasses of decision processes.
In this work, we focus on Regular Decision Processes (RDPs) \citep{brafman_2019_RegularDecision,brafman2024regular}.
In RDPs, the distribution of the next observation and reward is allowed to vary according to regular properties evaluated on the history.
Thus, these dependencies can be captured by a Deterministic Finite-state Automaton (DFA).
RDPs are expressive models that can represent complex dependencies, which may be based on events that occurred arbitrarily back in time.
For example, we could model that an agent may only enter a restricted area if it has previously asked for permission and the access was granted.

% The problem
Due to the properties above, RL algorithms for RDPs are very general and applicable.
However, provably correct and sample efficient algorithms for RDPs are still missing.
On one hand, local optimization approaches are generally more efficient, but lack correctness guarantees.
In this group, we recall \citet{abadi_2020_LearningSolving,toroicarte2019learning} and all RL algorithms with policy networks that can operate on sequences.
On the other hand, algorithms with formal guarantees do not provide a practical implementation \citep{cipollone2023,ronca_2021_EfficientPACa} or can only be applied effectively in small environments \citep{ronca_2022_MarkovAbstractions}.
%
% What we do
In this work, we propose a new offline RL algorithm for RDPs with a Probably Approximately Correct (PAC) sample complexity guarantee. 
Our algorithm improves over previous work in three ways.
First, we overcome a limitation of previous sample complexity bounds,
which is the dependence on a parameter called $\PrefL$-distinguishability.
This is desirable because there exist simple non-Markovian domains in which this parameter decays exponentially with respect to the number of RDP states; an example is the T-maze by \citet{bakker2001rlandlstm}, discussed below.
Second, a careful treatment of the individual features that compose the trace and each observation allows us to further improve the efficiency.
Third, inspired by the automaton-learning algorithm FlexFringe~\citep{baum2023ff}, we use a data structure called Count-Min-Sketch (CMS) \citep{cormodeM05cms} to compactly represent probability distributions on large sets.

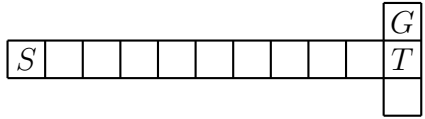
\begin{figure}
  \centering
  \begin{tikzpicture}[scale=0.5]

% Draw the horizontal corridor
\draw[thick] (0,0) -- (10,0); % bottom wall
\draw[thick] (0,1) -- (10,1); % top wall
\draw[thick] (0,0) -- (0,1); % left end wall
\draw[thick] (10,0) -- (10,1); % right end wall

% Draw the vertical branches
\draw[thick] (10,1) -- (11,1); % top horizontal
\draw[thick] (10,0) -- (11,0); % bottom horizontal
\draw[thick] (10,1) -- (10,2); % right vertical
\draw[thick] (10,0) -- (10,-1); % left vertical

% Draw the left end of the T
\draw[thick] (10,2) -- (11,2); % right horizontal top
\draw[thick] (10,-1) -- (11,-1); % right horizontal bottom
% \draw[thick] (14,2) -- (14,-1); % right vertical connecting

% Draw the cells in the corridor
\foreach \x in {1,2,...,9} {
    \draw[thick] (\x,0) -- (\x,1); % vertical lines
}

% Draw the cells in the vertical branches
\draw[thick] (11,2) -- (11,-1); % vertical middle line in T-junction

% Labels
\node at (0.5, 0.5) {\large $S$};
\node at (10.5, 0.5) {\large $T$};
\node at (10.5, 1.5) {\large $G$};
% (roberto) I removed X because I found it confusing: it's not an observation, nor a location.

\end{tikzpicture}
  \caption{T-maze~\cite{bakker2001rlandlstm} with corridor length $N=10$.
  The observation produced at the initial position~$S$ indicates the position of the goal~$G$ at the end of the corridor for the current episode.}
  \label{fig:tmaze}
\end{figure}

\begin{example}[T-maze] \label{ex:tmaze}
  As a running example, we consider the T-Maze domain \citep{bakker2001rlandlstm}, a deterministic non-Markovian grid-world environment.
  An agent has to reach a goal~$G$ from an initial position~$S$ in a corridor of length~$N$ that terminates with a T-junction as shown in Figure~\ref{fig:tmaze}.
  The agent can move one cell at a time, taking actions $\mathit{North}$, $\mathit{South}$, $\mathit{East}$, or $\mathit{West}$.
  In each episode, the rewarding goal~$G$ can be in the cell above or below the T-junction.
  Depending on the position of the goal, the observation in state $S$ is $011$ or $110$.
  In the corridor the observation is $101$, and at the T-junction the observation is $010$.
  This means that, when crossing the corridor, the agent cannot observe the current location or the goal position.
  This yields a history-dependent dynamics that cannot be modeled with an MDP or any k-order MDP.
  As we show later, this domain can be expressed as an RDP.
  %To intuitively describe how to do this, we consider a finite-state automaton with two components, each with 13 states arranged as the T-maze itself.
  %The first component of the automaton will generate a reward in the top cell, and the second in the bottom cell.
  %The initial state of the RDP will select the initial observation and transition to the corresponding component accordingly.
\end{example}

\paragraph{Contributions}
We study offline RL for Regular Decision Processes (RDPs) and address limitations of previous algorithms.
To do so, we develop a novel language metric~$L_\Languages$, based on the theory of formal languages, and define a hierarchy of language families that allows for a fine-grained characterization of the RDP complexity.
Unlike previous algorithms, the RL algorithm we develop based on this language hierarchy does not depend on $\PrefL$-distinguishability and is exponentially more sample efficient in domains having low complexity in language-theoretic terms.
In addition, we reduce the space complexity of the algorithm and modify \textsc{RegORL} with the use of Count-Min-Sketch.
To validate our claims, we provide a theoretical analysis for both variants, when the $L_\Languages$-distinguishability or CMS is used.
Finally, we provide an experimental analysis of both approaches.

\subsection{Related work}

\paragraph{RL in RDPs}
The first online RL algorithm for RDPs is provided in \citet{abadi_2020_LearningSolving}.
%which is based on a clustering technique.
Later, \citet{ronca_2021_EfficientPACa} and \citet{ronca_2022_MarkovAbstractions} developed the first online RL algorithms with sample complexity guarantees.
The algorithm and the sample complexity bound provided in \citet{ronca_2021_EfficientPACa} adapts analogous results from automata learning literature \citep{ballepigem_2013_LearningFinitestate,balle_2013_LearningProbabilistic,balle_2014_AdaptivelyLearning,clark2004pac,palmer2007pac,ron_1998_LearnabilityUsage}.
RegORL from \citet{cipollone2023} is an RL algorithm for RDPs with sample complexity guarantees for the offline setting.
In this work, we study the same setting and improve on two significant weaknesses of RegORL, regarding the sample compexity and the space requirements.
The details of these improvements are discussed in the following sections.
Lastly, the online RL algorithm in \citet{toroicarte2019learning} can also be applied to RDPs,
but it is not proven to be sample efficient.

\paragraph{Non-Markovian RL}
Apart from RDP algorithms, one can apply RL methods to more general decision processes.
Indeed, the automaton state of an RDP can be seen as an information state, as defined in \cite{subramanian_2022_ApproximateInformationState}.
As shown in \citep{brafman_2019_RegularDecision},
any RDP can also be expressed as a POMDP whose hidden dynamics evolves according to its finite-state automaton.
Therefore, any RL algorithm for generic POMDPs can also be applied to RDPs.
Unfortunately, planning and learning in POMDPs is intractable \citep{krishnamurthy_2016_PACReinforcementLearning,papadimitriou1987complexity}.
More efficient learning algorithms, with more favorable complexity bounds, have been obtained for subclasses of POMDPs, such as undercomplete POMDPs \citep{guo_2022_ProvablyEfficient,jin2020undercomplete},
few-step reachability \citep{guo2016pac},
ergodicity \citep{azizzadenesheli2016reinforcement},
few-step decodability \citep{efroni_2022_ProvableReinforcementLearning,krishnamurthy_2016_PACReinforcementLearning},
or weakly-revealing \citep{liu_2022_WhenPartiallyObservable}.
However, none of these assumptions exhaustively capture the entire RDP class, and they cannot be applied in general RDPs.

Regarding more general non-Markovian dynamics,
Predictive State Representations (PSRs) \citep{bowling2006learning,james2004learning,kulesza2015spectral,singh2003learning} are general descriptions of dynamical systems that capture POMDPs and therefore RDPs.
There exist polynomial PAC bounds for online RL in PSRs \citep{zhan2023pac}.
However, these bounds involve parameters that are specific to PSRs and do not immediately apply to RDPs.
Feature MDPs and state representations both share the idea of having a map from histories to a state space.
This is analogous to the map determined by the transition function of the automaton underlying an RDP.
Algorithmic solutions for feature MDPs are based on suffix trees, and they cannot yield optimal performance in our setting \citep{hutter2009feature,veness2011aixi}.
The automaton of an RDP can be seen as providing one kind of state representation \citep{mahmud2010constructing,maillard2011selecting,maillard2013optimal,nguyen2013competing,ortner2019regret}.
The existing bounds for state representations show a linear dependency on the number of candidate representations, which is exponential in the number of states in our case.
A similar dependency is also observed in \cite{lattimore2013sample}.
With respect to temporal dependencies for rewards, Reward Machines consider RL with non-Markovian rewards
\citep{bourel2023exploration,degiacomo2019bolts,degiacomo2020monitoring,hasanbeig2021deepsynth,toroicarte2018reward,xu2020jirp}.
However, this dependency is usually assumed to be known, which makes the Markovian state computable.
Lastly, non-Markovianity is also introduced by the logical specifications that the agent is
required to satisfy
\citep{9196796,fu2014paccontrol,DBLP:conf/tacas/HahnPSSTW19,hammond2021multiagent,hasanbeig2020cautious};
however, it is resolved a priori from the known specification.

\paragraph{Offline RL in MDPs.}
There is a rich and growing literature on offline RL, and provably sample efficient algorithms have been proposed for various settings of MDPs \citep{chen2019information,jin2021pessimism,li_2022_SettlingSample,rashidinejad2021bridging,ren2021nearly,uehara2022pessimistic,uehara2022representation,xie_2021_PolicyFinetuning,yin2021towards,zhan2022offline}. For example, in the case of episodic MDPs, it is established that the optimal sample size in offline RL depends on the size of state-space, episode length, as well as some notion of concentrability, reflecting the distribution mismatch between the behavior and optimal policies. 
A closely related problem is off-policy learning \citep{kallus2020double,maei2010toward,thomas2016data,uehara2022review}.

\section{Preliminaries}

\paragraph{Notation}
Given a set $\Set{Y}$, $\Distribution{\Set{Y}}$ denotes the set of probability distributions over~$\Set{Y}$. 
For a function ${f: \Set{X} \to \Distribution{\Set{Y}}}$,
$f(y \Given x)$ is the probability of $y\in \Set{Y}$ given~$x\in \Set{X}$. 
Further, we write $y \sim f(x)$ to abbreviate $y \sim f(\cdot \Given x)$. 
%For $y\in\Set{Y}$, we use $\Kronecker{y} \in \Distribution{\Set{Y}}$ to denote the Kronecker delta defined as $\Kronecker{y}(y)=1$ and $\Kronecker{y}(y')=0$ for each $y'\in\Set{Y}$ such that $y'\neq y$.
Given an event $E$,  $\Indicator(E)$ denotes the indicator function of $E$, which equals $1$ if $E$ is true, and $0$ otherwise. %, e.g.~$\Kronecker{y}(y')=\Indicator(y=y')$. 
For any pair of integers $m$ and $n$ such that $0\leq m\leq n$, we let $[m,n]:=\{m,\ldots,n\}$ and $[n]:=\{1, \dots, n\}$.
%Given a set $\Set{X}$, for $k \in \Naturals$, $\Set{X}^k$ represents the set of sequences of length $k$ whose elements are from $\Set{X}$. Also, $\Set{X}^* = \cup_{k=0}^\infty \Set{X}^k$.
The notation $\widetilde{\mathcal{O}}(\cdot)$ hides poly-logarithmic terms. 

\paragraph{Count-Min-Sketch}
Count-Min-Sketch, or CMS~\cite{cormodeM05cms}, is a data structure that compactly represents a large non-negative vector $v=[v_1,\ldots,v_m]$.
CMS takes two parameters $\delta_c$ and $\varepsilon$ as input, and constructs a matrix $C$ with $d=\lceil \log \frac 1 {\delta_c} \rceil$ rows and $w=\lceil \frac e \varepsilon \rceil$ columns.
For each row $j\in[d]$, CMS picks a hash function $h_j:[m]\to[w]$ uniformly at random from a pairwise independent family \citep{motwani1995randomized}.
Initially, all elements of $v$ and $C$ equal $0$.
An update $(i,c)$ consists in incrementing the element $v_i$ by $c>0$.
CMS approximates an update $(i,c)$ by incrementing $C(j,h_j(i))$ by $c$ for each $j\in[d]$.
At any moment, a point query $\widetilde{v}_i$ returns an estimate of $v_i$ by taking the minimum of the row estimates, i.e.~$\widetilde{v}_i = \min_j C(j,h_j(i))$.
It is easy to see that $\widetilde{v}_i\geq v_i$, i.e.~CMS never underestimates $v_i$.

\subsection{Languages and operators} \label{sec:languages-operators}
An \emph{alphabet} $\Alphabet$ is a finite non-empty set of elements called \emph{letters}.
A \emph{string} over $\Alphabet$ is a concatenation $a_1 \cdots a_\ell$ of letters from $\Alphabet$, and we call $\ell$ its length.
In particular, the string containing no letters, having length zero, is a valid string called the \emph{empty string}, and is denoted by $\emptystring$.
Given two strings $x_1 = a_1 \cdots a_\ell$ and $x_2 = b_1 \cdots b_m$, their concatenation $x_1x_2$ is the string $a_1 \cdots a_\ell b_1 \cdots b_m$. In particular,
for any given string $x$, we have $x \emptystring = \emptystring x = x$.
The set of all strings over alphabet $\Alphabet$ is written as $\Alphabet^*$, and the set of all strings of length $\ell$ is written as $\Alphabet^\ell$. 
Thus, $\Alphabet^* = \cup_{\ell \in \Naturals} \Alphabet^\ell$.
A \emph{language} is a subset of $\Alphabet^*$.
Given two languages $\Language_1$ and $\Language_2$, their concatenation is the language defined by 
$\Language_1 \Language_2 = \{ x_1 x_2 \mid x_1 \in \Language_1 \text{, } x_2 \in \Language_2 \}$. 
When concatenating with a language $\{ a \}$ containing a single string consisting of a letter $a \in \Alphabet$, we simply write $\Language a$ instead of $\Language \{ a \}$.
Concatenation is associative and hence we can write the concatenation $\Language_1 \Language_2 \cdots \Language_k$ of an arbitrary number of languages.
%A language~$\Language$ is a \emph{marked product} of the languages $\Language_0, \dots, \Language_n$ if
%$\Language$ is given by the concatenation $\Language_0 a_0 \Language_1 a_1 \dots \Language_n$ for some letters $a_0, \dots, a_{n-1} \in \Alphabet$.
%Given a set of languages~$\Languages$, the operator $\Polylang(\Languages)$, called polynomial closure of~$\Languages$,
%denotes the set of all marked products of languages in~$\Languages$.
%In addition, $\Polylang_k(\Languages)$ is the set of marked products of $k$ languages in~$\Languages$,
%and $\Polylang_{\le k}(\Languages) = \cup_{i=0,\dots,k}\Polylang_i(\Languages)$.
%% This is not relevant anymore. Maybe we want to make this connection later in the paper. We can skip this for now
%%The Boolean closure operator $\Boollang(\Languages)$ is the set of languages that result from repeated intersection, union, and negation of languages in~$\Languages$.
%%Together, these two operators induce a hierarchy of languages of increasing complexity, generally called the dot-depth hierarchy \citep{pin2017dot}.

Given the fundamental definitions above, we introduce two operators to construct sets of languages.
The first operator $\operatorname{C}_k^\ell$ is defined for any two non-negative integers $\ell$ and $k \in \{1,\dots,\ell\}$, it takes  a set of languages $\mathcal{G}$, and constructs a new set of languages as follows:
\begin{align*}
\operatorname{C}_k^\ell(\mathcal{G}) = 
\{  \Alphabet^\ell \cap S_1 G_1 S_2 G_2 \cdots S_k G_k S_{k+1} \mid G_1, \dots, G_k \in \mathcal{G}, \; S_1, \dots, S_{k+1} \in \mathcal{S} \},
\end{align*}
where the set $\mathcal{S} = \{ \Alphabet^*, \{ \emptystring \} \}$ consists of the language $\Alphabet^*$ of all strings over the considered alphabet and the singleton language $\{ \emptystring \}$ consisting of the empty string only.
In the definition, each $S_i$ can be $\Gamma^*$ or $\emptystring$.
Choosing $S_i = \Gamma^*$ allows arbitrary letters between a string from $G_i$ and the next string from $G_{i+1}$, whereas choosing $S_i = \{ \emptystring \}$ enforces that a string from $G_i$ is immediately followed by a string from $G_{i+1}$.
% Finally, note that $\operatorname{C}_\ell^\ell(\Alphabet) = \Alphabet^\ell$.
Also, the intersection with $\Gamma^\ell$ amounts to restricting to strings of length $\ell$ in the languages given by the concatenation.

The second operator we define, $\Boollang$, takes a set of languages $\Languages$, and it constructs a new set of languages $\Boollang(\Languages) = \Languages^{\sqcup} \cup \Languages^{\sqcap}$ by taking combinations as follows:
\begin{align*}
    % \Languages^{\mathsf{c},\ell} & = \{ \Gamma^\ell \setminus \Language \mid \Language \in \Languages \},
    % \;\;
    & \Languages^{\sqcup}  = \{ \Language_1 \cup \Language_2 \mid \Language_1,\Language_2 \in \Languages \},
    && \Languages^{\sqcap}  = \{ \Language_1 \cap \Language_2 \mid \Language_1, \Language_2 \in \Languages \}.
\end{align*}
The set $\Languages^{\sqcup}$ consists of the pair-wise unions of the languages in $\Languages$.
For example, when $\Languages = \{ \{ ac, ad \}, \{ac,bc\} \}$, we have $\Languages^{\sqcup} = \Languages \cup \{ \{ ac, ad, bc \}  \}$. 
Similarly, the set $\Languages^{\sqcap}$ consists of the pair-wise intersections of the languages in $\Languages$.
For example, when $\Languages = \{ \{ ac, ad \}, \{ac,bc\} \}$, we have $\Languages^{\sqcap} = \Languages \cup \{ \{ ac \}  \}$. 
From the previous example we can observe that $\Languages \subseteq \Languages^{\sqcup}, \Languages^{\sqcap}$, which holds in general since 
$\Language = \{ \Language \cap \Language \mid X \in \Languages \} \subseteq \Languages^{\sqcap}$ and similarly $\Language = \{ \Language \cup \Language \mid X \in \Languages \} \subseteq \Languages^{\sqcup}$
Therefore $\Languages \subseteq \Boollang(\Languages)$.
The operators will later be used to define relevant patterns on episode traces.
They are inspired by classes of languages in the first level of the \emph{dot-depth hierarchy}, a well-known hierarchy of star-free regular languages \citep{pin2017dot,simon1972hierarchies,DBLP:journals/ita/Therien05}.
% The Boolean closure $\Boollang^*(\Languages)$ is obtained by iterating $\Boollang$.

%The following is an example of a set of languages we can build with the operators above.
%Let us consider the set of all singleton languages $\mathcal{S} = \{ \{ \gamma \} \mid \gamma \in \Gamma \}$, and the set of trivial languages $\mathcal{T} = \{ \emptyset, \Gamma^* \}$.
%For some $k$,
%we define the set of marked products $\mathcal{M} = (\mathcal{T} \mathcal{S}^\mathsf{c})^k \mathcal{T}$. Then we define the set of languages $\Languages = %\mathcal{M}^{\sqcup} \cup \mathcal{M}^{\sqcap}$.
%Note that, with $k=1$, the set $\mathcal{M} = \mathcal{T} \mathcal{S}^\mathsf{c} \mathcal{T}$ would contain both the language that checks for an occurrence of %$\gamma$ anywhere in the suffix, and the language that checks for $\gamma$ appearing ``with delay 1'', i.e., at the beginning of the string, which corresponds %to the case when we instantiate the first occurrence of $\mathcal{T}$ in $\mathcal{M}$ to the empty set, and its second occurrence to $\Gamma^*$.

\subsection{Episodic regular decision processes}
We first introduce generic episodic decision processes. 
An episodic decision process is a tuple $\Process=\tuple{\Observations,\Actions,\Rewards,\Nonmark{T},\Nonmark{R},H}$, where $\Observations$ is a finite set of observations, $\Actions$ is a finite set of actions, $\Rewards\subset[0,1]$ is a finite set of rewards, and $H\geq 1$ is a finite horizon.
We frequently consider the concatenation $\Actions\Observations$ of the sets $\Actions$ and $\Observations$.
%As is common in automata theory, we use sequences $a_mo_m\cdots a_no_n$ to denote traces of actions and observations, and concatenation $\Actions\Observations=\Actions\times\Observations$ as shorthand for the Cartesian product.
Let $\Histories_t=(\Actions\Observations)^{t+1}$ be the set of histories of length $t+1$, and let $e_{m:n}\in\Histories_{n-m}$ denote a history from time $m$ to time $n$, both included.
Each action-observation pair $ao\in\Actions\Observations$ in a history has an associated reward label $r\in\Rewards$, which we write $ao/r\in\Actions\Observations/\Rewards$.
A {\em trajectory} $e_{0:T}$ is the full history generated until (and including) time $T$.
%\sadegh{If it's not including $T$, then we must have $e_{0:T-1}$ to be consistent with earlier defs.}

We assume that a trajectory $e_{0:T}$ can be partitioned into {\em episodes} $e_{\ell:\ell+H}\in\Histories_H$ of length $H+1$,
%and that the dynamics at time $T=k(H+1)+t$, $t\in[H]$, are conditionally independent of the previous episodes and all rewards, i.e.~the dynamics only depend on $a_{k(H+1)}o_{k(H+1)}\cdots a_To_T$. For $t\in[H]$, let $\Histories_t=(\Actions\Observations)^{t+1}$ denote the relevant part of the trajectory for decision making, and let $\Histories=\cup_{t=0}^H\Histories_t$.
%
%We refer to elements in $\Histories$ as {\em histories}, even though they are not complete trajectories.
In each episode $e_{0:H}$, $a_0=a_\bot$ is a dummy action used to initialize the distribution on $\Histories_0$.
The transition function $\Nonmark{T}:\Histories\times\Actions\to\Distribution{\Observations}$ and the reward function $\Nonmark{R}:\Histories\times\Actions\to\Distribution{\Rewards}$ depend on the current history in $\Histories=\cup_{t=0}^H\Histories_t$.
Given $\Process$, a generic policy is a function $\Policy:(\Actions\Observations)^*\to\Distribution{\Actions}$ that maps trajectories to distributions over actions.
The value function $V^\Policy:[0,H]\times \Histories\to\Reals$ of a policy $\Policy$ is a mapping that assigns real values to histories.
For $h\in \Histories$, it is defined as $V^\Policy(H, h)\coloneqq 0$ and
\begin{equation}
	V^\Policy(t, h) \coloneqq \Expected \left[\, \sum_{i=t+1}^H r_i \, \middle\vert \, h, \Policy \right], \;\; \forall t<H, \forall h\in \Histories_t.
	\label{eq:value-as-sum}
\end{equation}
For brevity, we write $V^\pi_t(h):=V^\pi(t,h)$.
The optimal value function $V^*$ is defined as $V^*_t(h) \coloneqq \sup_{\pi} V^\pi_t(h), \forall t\in [H], \forall h\in \Histories_t$,  
where $\sup$ is taken over all policies $\Policy:(\Actions\Observations)^*\to\Distribution{\Actions}$. Any policy achieving  $V^*$ is called optimal, which we denote by $\pi^*$; namely $V^{\pi^*}=V^*$. 
Solving $\Process$ amounts to finding $\Policy^*$.
In what follows, we consider simpler policies of the form $\Policy:\Histories\to\Distribution{\Actions}$ mapping finite histories to distributions over actions.
Let $\Pi_\Histories$ denote the set of such policies.
It can be shown that $\Pi_\Histories$ always contains an optimal policy, i.e.~$V^*_t(h) \coloneqq \max_{\pi\in\Pi_\Histories} V^\pi_t(h), \forall t\in [H], \forall h\in \Histories_t$. An episodic MDP is an episodic decision process whose dynamics at each timestep~$t$ only depends on the last observation and action \citep{puterman_1994_MarkovDecision}.

\paragraph{Episodic RDPs}
An episodic Regular Decision Process (RDP)~\citep{abadi_2020_LearningSolving,brafman_2019_RegularDecision,brafman2024regular} is an episodic decision process $\RDP=\tuple{\Observations,\Actions,\Rewards,\Nonmark{T},\Nonmark{R},H}$ described by a \emph{finite transducer} (Moore machine)  $\Tuple{ \Qs, \Inputs, \Outputs, \TransitionFn, \OutputFn, q_0}$, where $\Qs$ is a finite set of states, $\Inputs= \Actions\, \Observations$ is a finite input alphabet composed of actions and observations, $\Outputs$ is a finite output alphabet, $\TransitionFn: \Qs \times \Inputs \to \Qs$ is a transition function, $\OutputFn: \Qs \to \Outputs$ is an output function, and $q_0\in\Qs$ is a fixed initial state \citep{moore1956gedanken,shallit2008second}. 
The output space $\Outputs = \Outputs_\mathsf{o} \times \Outputs_\mathsf{r}$ consists of a finite set of functions that compute the conditional probabilities of observations and rewards, on the form $\Outputs_\mathsf{o} \subset \Actions \to \Distribution{\Observations}$ and $\Outputs_\mathsf{r} \subset \Actions \to \Distribution{\Rewards}$.
For simplicity, we use two output functions, $\OutputFnO: \Qs \times \Actions \to \Distribution\Observations$ and $\OutputFnR: \Qs \times \Actions \to \Distribution\Rewards$, to denote the individual conditional probabilities.
Let $\TransitionFn^{-1}$ denote the inverse of $\TransitionFn$, i.e.~$\TransitionFn^{-1}(q)\subseteq\Qs\times\Actions\Observations$ is the subset of state-symbol pairs that map to $q\in\Qs$.
%In this context, an input symbol is an element of $\Actions \Observations$.
An RDP $\RDP$ implicitly represents a function $\bar\TransitionFn: \Histories \to \Qs$ from histories in $\Histories$ to states in $\Qs$, recursively defined as
$\bar\TransitionFn(h_0) \coloneqq \TransitionFn(q_0, a_0 o_0)$, where $a_0$ is some fixed starting action, and $\bar\TransitionFn(h_t) \coloneqq \TransitionFn(\bar\TransitionFn(h_{t-1}), a_t o_t)$.
The dynamics and of $\RDP$ are defined as $\Nonmark{T}(o \Given h,a)=\OutputFnO(o \Given \bar\TransitionFn(h),a)$ and $\Nonmark{R}(r \Given h,a)=\OutputFnR(r \Given \bar\TransitionFn(h),a)$, $\forall h\in\Histories, \forall ao/r\in\Actions\Observations/\Rewards$.
As in previous work~\citep{cipollone2023}, we assume that any episodic RDP generates a designated termination observation~$\StopSymbol \in \Observations$ after exactly~$H$ transitions.
This ensures that any episodic RDP is acyclic, i.e.~the states can be partitioned as $\Qs=\Qs_0\cup\cdots\cup\Qs_{H+1}$, where each $\Qs_{t+1}$ is the set of states generated by the histories in $\Histories_t$ for each $t\in[0,H]$.
An RDP is minimal if its Moore machine is minimal.
We use $A, R, O, Q$ to denote the cardinality of $\Actions, \Rewards, \Observations, \Qs$, respectively, and assume $A\geq 2$ and $O\geq 2$.

Since the conditional probabilities of observations and rewards are fully determined by the current state-action pair $(q,a)$, an RDP $\RDP$ adheres to the Markov property over its states, but \emph{not over the observations}.
Given a state $q_t \in \Qs$ and an action $a_t \in \Actions$,
the probability of the next transition is
\[
	\Pr(r_t, o_t, q_{t+1} \Given q_t, a_t, \RDP) =
		\OutputFnR(r_t \Given q_t, a_t)\, \OutputFnO(o_t \Given q_t, a_t)\, \Indicator(q_{t+1}=\TransitionFn(q_t,a_t o_t)).
\]
Since RDPs are Markovian in the unobservable states~$\Qs$, there is an important class of policies that is called \emph{regular}.
Given an RDP $\RDP$, a policy $\Policy:\Histories\to\Distribution{\Actions}$ is called \emph{regular} if $\Policy(h_1)=\Policy(h_2)$ whenever $\bar\TransitionFn(h_1)=\bar\TransitionFn(h_2)$, for all $h_1,h_2\in \Histories$.
Hence, we can compactly define a regular policy as a function of the RDP state, i.e.~$\pi:\Qs\to\Distribution{\Actions}$.
Let $\Pi_\RDP$ denote the set of regular policies for $\RDP$.
Regular policies exhibit powerful properties. First, under a regular policy, suffixes have the same probability of being generated for histories that map to the same RDP state. Second, there exists at least one optimal policy that is regular.
Finally, in the special case where an RDP is Markovian in both observations and rewards, it reduces to a nonstationary episodic MDP.

\begin{example}[RDP for T-maze]\label{ex:tmaze-rdp}
  Consider the T-maze described in Example~\ref{ex:tmaze}.
  This can be modeled as an episodic RDP 
  $\Tuple{ \Qs, \Inputs, \Outputs, \TransitionFn, \OutputFn, q_0}$
  with states $\Qs \coloneqq q_0 \cup (\{ q_{\top 1}, \dots, q_{\top 13}\} \cup \{ q_{\bot 1}, \dots, q_{\bot 13}\}) \times \{1, \dots, H\}$, which include the initial state~$q_0$ and two parallel components, $\top$ and $\bot$, for the 13 cells of the grid world.
  In addition, each state also includes a counter for the time step.
  Within each component $\{(q_{\top i}, t)\}_i$ and $\{(q_{\bot i}, t)\}$, the transition function~$\TransitionFn$ mimics the grid world dynamics of the maze and increments the counter~$t$.
  From the initial state and the start action~$a_0$, $\TransitionFn(q_0, a_0 o_0)$ equals $q_{1,\top}$ if $o_0 = 110$, and $q_{1,\bot}$ if $o_0 = 011$.
  Observations are deterministic as described in Example~\ref{ex:tmaze},
  except for $\OutputFnO(q_0, a_0) = \Unif\{110,011\}$.
  The rewards are null, except for a 1 in the top right or bottom right cell, depending if the current state is in the component $\top$ or $\bot$, respectively.
\end{example}

\paragraph{Occupancy and distinguishability}
Given a regular policy $\Policy:\Qs\to\Distribution{\Actions}$ and a time step $t\in[0,H]$, let $\COccupancy_t^\Policy\in\Distribution{\Qs_t\times\Actions\,\Observations}$ be the induced {\em occupancy}, i.e.~a probability distribution over the states in $\Qs_t$ and the input symbols in $\Actions\,\Observations$, recursively defined as $\COccupancy_0^\Policy(q_0,a_0o_0) = \OutputFnO(o_0 \Given q_0, a_0)$ and 
\begin{align*}
%\COccupancy_0^\Policy(q_0,a_0o_0) &= \OutputFnO(q_0, a_0, o_0),\\
\COccupancy_t^\Policy(q_t,a_to_t) &= \sum_{(q,ao)\in\TransitionFn^{-1}(q_t)} \COccupancy_{t-1}^\Policy(q,ao) \cdot \Policy(a_t \Given q_t) \cdot \OutputFnO(o_t \Given q_t, a_t), \;\; t>0.
\end{align*}
Of particular interest is the occupancy distribution $\COccupancy_t^* \coloneqq \COccupancy_t^{\Policy^*}$ associated with an optimal policy $\Policy^*$. Let us assume that $\Policy^*$ is unique, which further implies that $d^*_t$ is uniquely defined\footnote{This assumption is imposed to ease the definition of the concentrability coefficient that follows, and is also considered in the offline  reinforcement learning literature (e.g., \cite{nguyen2023instance}). In the general case, one may consider the set $\Set{D}^*$ of occupancy distributions, collecting occupancy distributions of all optimal policies.}.

Consider a minimal RDP~$\RDP$ with states $\Qs = \cup_{t \in [0,H+1]} \Qs_t$.
Given a regular policy $\Policy\in\Pi_\RDP$ and a time step $t\in[0,H]$, each RDP state $q \in \Qs_t$ defines a unique probability distribution $\Pr(\cdot\Given q_t = q, \Policy)$ on episode suffixes in $\Episodes_{H-t}=(\Actions \Observations / \Rewards )^{H-t+1}$.
The states in $\Qs_t$ can be compared in terms of the probability distributions they induce over $\Episodes_{H-t}$.
Consider any $L = \{L_\ell\}_{\ell = 0}^{H}$, where each $L_\ell$ is a metric over $\Distribution{\Episodes_\ell}$.
We define the \emph{$L$-distinguishability} of $\RDP$ and $\Policy$ as the maximum $\CDistinguish\geq 0$ such that,
for any $t \in [0,H]$ and any two distinct $q, q' \in \Qs_t$, the probability distributions over suffix traces $e_{t:H} \in \Episodes_\ell$ from the two states satisfy
\[
L_{H-t}(\Pr(e_{t:H} \Given q_t = q, \Policy), \Pr(e_{t:H} \Given q_t = q', \Policy)) \ge \CDistinguish\, .
\]
We will often omit the remaining episode length $\ell = H-t$ from $L_\ell$ and simply write~$L$. 
%Instantiating the definition above,we consider the $\PrefL$-distinguishability as the parameter~$\CDistinguish$ defined according to the metric $\PrefL(p_1, p_2) = \max_{u \in [\ell], e \in \Episodes_u} \abs{p_1(e\, *) - p_2(e\, *)}$, where $p_i(e\,*)$ represents the probability of the trace prefix~$e\in\Episodes_u$, followed by any trace~$e' \in \Episodes_{\ell-u-1}$.
We consider the $\PrefL$-distinguishability, instantiating the definition above with  
the metric $\PrefL(p_1, p_2) = \max_{u \in [0,\ell], e \in \Episodes_u} \abs{p_1(e\, *) - p_2(e\, *)}$,
where $p_i(e\,*)$ represents the probability of the trace prefix~$e\in\Episodes_u$, followed by any trace~$e' \in \Episodes_{\ell-u-1}$. The $\PrefLOne$-distinguishability is defined analogously using $\PrefLOne(p_1, p_2) = \sum_{u \in [0,\ell], e \in \Episodes_u} \abs{p_1(e\, *) - p_2(e\, *)}$.

\section{Learning RDPs with state-merging algorithms from offline data}

Here we describe an algorithm called \textsc{AdaCT-H}~\cite{cipollone2023} for learning episodic RDPs from a dataset of episodes.
The algorithm starts with a set $\Dataset$ of episodes generated using a regular behavior policy $\Behav\Policy$, where the $k$-th episode is of the form 
$e_{0:H}^k=a_0^ko_0^k/r_0^k\cdots a_H^ko_H^k/r_H^k$ and where, for each $t\in[0,H]$,
$$
q_0^k = q_0, \quad a_t^k\sim \Behav\Policy(q_t^k), \quad o_t^k\sim \OutputFnO(q_t^k,a_t^k), \quad r_t^k\sim \OutputFnR(q_t^k,a_t^k), \quad q_{t+1}^k = \TransitionFn(q_t^k,a_t^ko_t^k).
$$
Note that the behavior policy $\Behav\Policy$ and underlying RDP states $q_t^k$ are unknown to the learner.
The algorithm is an instance of the PAC learning framework and takes as input an accuracy $\varepsilon\in (0,H]$ and a failure probability $\delta\in(0,1)$.
The aim is to find an $\varepsilon$-optimal policy $\widehat\Policy$ satisfying $V_0^*(h) - V_0^{\widehat{\Policy}}(h)\leq\varepsilon$ for each $h\in\Histories_0$ with probability at least $1-\delta$, using the smallest dataset $\Dataset$ possible.

Since \textsc{AdaCT-H} performs offline learning, it is necessary to control the mismatch in occupancy between the behavior policy $\Behav\Policy$ and the optimal policy $\Policy^*$. 
Concretely, the single-policy RDP concentrability coefficient associated with RDP $\RDP$ and behavior policy $\Behav\Policy$ is defined as \cite{cipollone2023}:
	\begin{equation}
		C_{\RDP}^* = \max_{t \in [H], q \in \Qs_t, ao \in \Actions\Observations}
			\frac{\COccupancy_t^{*}(q, ao)}{\Behav\COccupancy_t(q, ao)} \, .
	\end{equation}
As \citep{cipollone2023}, we also assume that the concentrability is bounded away from infinity, i.e.~that $C_{\RDP}^*<\infty$.

\textsc{AdaCT-H} is a state-merging algorithm that iteratively constructs the set of RDP states $\Qs_1,\ldots,\Qs_H$ and the associated transition function~$\TransitionFn$.
For each $t\in[0,H]$, \textsc{AdaCT-H} maintains a set of candidate states $qao\in\Qs_{t-1}\times\Actions\Observations$.
Each candidate state $qao$ has an associated multiset of suffixes $\Multiset(qao)=\{e_{t:H}^k : e^k\in\Dataset, \bar\TransitionFn(e_{0:t-1}^k)=q, a_{t-1}^ko_{t-1}^k=ao\}$, i.e.~episode suffixes whose history is consistent with $qao$.
To determine whether or not the candidate $qao$ should be promoted to $\Qs_t$ or merged with an existing RDP state $q_t$, 
\textsc{AdaCT-H} compares the empirical probability distributions on suffixes using the prefix distance $\PrefL$ defined earlier.
For reference, we include the pseudocode of \textsc{AdaCT-H}$(\Dataset,\delta)$ in Appendix~\ref{app:adacth}.
\citet{cipollone2023} prove that \textsc{AdaCT-H}$(\Dataset,\delta)$ constructs a minimal RDP $\RDP$ with probability at least $1-4AOQ\delta$.

In practice, the main bottleneck of \textsc{AdaCT-H} is the statistical test on the last line,
\[
\PrefL(\Multiset_1,\Multiset_2)\geq\sqrt{2\log(8(ARO)^{H-t}/\delta)/\min(|\Multiset_1|,|\Multiset_2|)},
\]
since the number of episode suffixes in $\Episodes_\ell$ is exponential in the current horizon $\ell$.
The purpose of the present paper is to develop tractable methods for implementing the statistical test.
These tractable methods can be directly applied to any algorithm that performs such statistical tests, e.g.~the approximation algorithm \textsc{AdaCT-H-A}~\cite{cipollone2023}.

\section{Tractable offline learning of RDPs}

% What we are trying to solve
The lower bound derived in \citet{cipollone2023} shows that sample complexity of learning RDPs is inversely proportional to the $\PrefLOne$-distinguishability.
When testing candidate states of the unknown RDP, $\PrefLOne$ is the metric that allows maximum separation between distinct distributions over traces.
Unfortunately, accurate estimates of $\PrefLOne$ are impractical to obtain for distributions over large supports---in our case the number of episode suffixes which is exponential in the horizon.
Accurate estimates of $\PrefL$ are much more practical to obtain.
However, there are instances for which states can be well separated in the $\PrefLOne$-norm, but have an $\PrefL$-distance that is exponentially small.
To address these issues, in this section we develop two improvements over the previous learning algorithms for RDPs.

\subsection{Testing in structured languages}

\paragraph{The language of traces}
Under a regular policy, any RDP state $\Qs_t$ uniquely determines a distribution over the remaining trace $a_t o_t / r_t \dots a_H o_H / r_H \in (\Actions \Observations / \Rewards)^{H-t+1}$.
Existing work \citep{cipollone2023,ronca_2021_EfficientPACa,ronca_2022_MarkovAbstractions} treats each element $a_i o_i / r_i$ as an independent atomic symbol.
This approach neglects the internal structure of each tuple and of the observations, which are often composed of features.
As a result, common conditions such as the presence of a reward become unpractical to express.
In this work, we allow observations to be composed of internal features.
Let $\Observations = \Observations\Feature{1} \times \dots \times \Observations\Feature{m}$ be an observation space.
We choose to see it as the language 
$\Observations = \Observations\Feature{1} \cdots \Observations\Feature{m}$ given by the concatenation of the features.
Then, instead of representing an observation $(o\Feature{1}, \dots, o\Feature{m})$ as an atomic symbol, we consider it as the word 
$o\Feature{1} o\Feature{2} \cdots o\Feature{m}$.
This results into traces $\Episodes_{i} = (\Actions\Observations\Feature{1} \cdots \Observations\Feature{m}/\Rewards)^{i+1}$,
%\Todo{Note about notation in overleaf comments}
and each regular policy and RDP state uniquely determine a distribution over strings from $\Episodes_{H-t+1}$.
This fine-grained representation greatly simplifies the representation of most common conditions,
such as the presence of specific rewards or features in the observation vector.

\paragraph{Testing in the language metric}

The metrics induced by the $L_1$ and $L_\infty$ norms are completely generic and can be applied to any distribution.
Although this is generally an advantage, this means that they do not exploit the internal structure of the sample space.
In our application, a sample is a trace that, as discussed above, can be regarded as a string of a specific language.
An important improvement, proposed by \citet{ballepigem_2013_LearningFinitestate}, is to consider $\PrefLOne$ and $\PrefL$, which take into account the variable length and conditional probabilities of longer suffixes.
This was the approach followed by the previous RDP learning algorithms.
However, these two norms are strongly different and lead to dramatically different sample and space complexities.

In this section, we define a new formalism that unifies both metrics and will allow the development of new techniques for distinguishing distributions over traces.
Specifically, instead of expressing the probability of single strings, we generalize the concepts above by considering the probability of \emph{sets} of strings.
A careful selection of the sets to consider, which are languages, will allow an accurate trade-off between generality and complexity.
\begin{definition}
    Let $\ell \in \mathbb{N}$, let $\Alphabet$ be an alphabet, and let $\Languages$ be a set of languages consisting of strings in~$\Alphabet^\ell$.
    The \emph{language metric} in~$\Languages$
    is the function $L_\Languages: \Distribution{\Alphabet^\ell} \times \Distribution{\Alphabet^\ell} \to \Reals$, on 
    pairs of probability distributions $p,p'$ over $\Gamma^\ell$, defined as 
    \begin{equation}
        L_\Languages(p, p') \coloneqq \max_{\Language \in \Languages} \abs{p(\Language) - p'(\Language)},
    \end{equation}
    where the probability of a language is $p(\Language) \coloneqq \sum_{x \in \Language} p(x)$.
\end{definition}
This original notion unifies all the most common metrics. Considering distributions over $\Gamma^\ell$, 
when $\Languages = \{\{x\} \mid x \in \Alphabet^\ell\}$,
the language metric $L_\Languages$ reduces to~$L_\infty$.
When $\Languages = 2^{\Alphabet^\ell}$, which is the set of all languages in~$\Alphabet^\ell$,
the language metric becomes the total variation distance, which is half the value of $L_1$.
A similar reduction can be made for the prefix distances.
The language metric captures~$\PrefL$ when $\Languages = \{x \Alphabet^{\ell-t} \mid t \in [0,\ell], x \in \Alphabet^t \}$, and it
captures~$\PrefLOne$ when $\Languages = \cup_{t \in [0,\ell]} 2^{\Alphabet^t}$.%~$\PrefLOne$.

\paragraph{Testing in language classes}

The language metric can be applied directly to the language of traces and used for testing in RDP learning algorithms.
In particular, it suffices to consider any set of languages~$\Languages$ that satisfy $\Language \subseteq \Episodes_{H-h} = (\Actions\Observations\Feature{1} \cdots \Observations\Feature{m}/\Rewards)^{H-t+1} \subseteq \Alphabet^{H-t+1}$, for each $\Language \in \Languages$.
However, as we have seen above, the selection of a specific set of languages~$\Languages$ has a dramatic impact on the metric that is being captured.
In this section, we study an appropriate trade-off between generality and sample efficiency, obtained through a suitable selection of~$\Languages$.
Intuitively, we seek to evaluate the distance between candidate states based on increasingly complex sets of languages. We present a way to construct a hierarchy of sets of languages of increasing complexity. As a first step, we define sets  of basic patterns $\mathcal{G}_i$ of increasing complexity. 
\begin{align*} 
    \mathcal{G}_1 = \ &
    \big\{ a\Observations / \Rewards \mid a \in \Actions \big\} 
    \cup  \big\{ \Actions\Observations / r \mid r \in \Rewards \big\}
    \cup 
    \big\{ \Actions\Observations\Feature{1} \cdots o\Feature{i} \cdots \Observations\Feature{m} / \Rewards \mid i \in [m] \text{, } o\Feature{i} \in \Observations\Feature{i} \big\},
    \\[2pt]
    \mathcal{G}_i = \ & \mathcal{G}_{i-1} \cup \Boollang(\mathcal{G}_{i-1}), \quad \forall i \in [2,m+2].
\end{align*}
In particular, $\mathcal{G}_1$ focuses on single components, by matching an action $a$, a reward $r$, or a single observation feature $o\Feature{i}$. At the opposite side of the spectrum, the set $\mathcal{G}_{m+2}$ considers every possible combination of actions, observation, reward; namely, it includes the set of singleton languages
$\big\{ \{ ao\Feature{1} \cdots o\Feature{m}/r \} \mid ao\Feature{1} \cdots o\Feature{m}/r \in \Actions\Observations\Feature{1} \cdots \Observations\Feature{m}/\Rewards \big\}$,
which is the most fine-grained choice of patterns, but it grows exponentially with $m$. On the contrary, the cardinality of $\mathcal{G}_1$ is linear in $m$. 
Starting from the above hiearchy, we identify two more dimensions along which complexity can grow. One dimension results from concatenating the basic patterns from $\mathcal{G}_i$, and it is obtained by applying the operator $\operatorname{C}_k^\ell$. The other dimension results from Boolean combinations of languages, and it is obtained by applying the operator $\Boollang$. 
%The two operators are defined in Section~\ref{sec:languages-operators}.
Thus, letting $\ell = {H-t+1}$, 
we define the following \emph{three-dimensional hierarchy} of sets $\Languages_{i,j,k}$ of languages:
\begin{align*}
    \Languages_{i,j,1} & = \Languages_{i,j-1,1} \cup \operatorname{C}_j^\ell(\mathcal{G}_i), && \hspace{-5pt} \forall i \in [1,m+2], \forall j \in [\ell],
    \\
    \Languages_{i,j,k} & = \Languages_{i,j,k-1} \cup \Boollang(\Languages_{i,j,k-1}), && \hspace{-5pt} \forall i \in [1,m+2], \forall j \in [\ell], \; \forall k \in [2,\ell].
\end{align*}
The family $\Languages_{i,j,k}$ induces a family of language metrics $L_{\Languages_{i,j,k}}$, which are non-decreasing along the dimensions of the hiearchy:
\begin{align*}
L_{\Languages_{i,j,k}} \leq L_{\Languages_{i+1,j,k}}, L_{\Languages_{i,j+1,k}}, L_{\Languages_{i,j,k+1}}, \quad \forall i \in [1,m+2], \forall j \in [\ell], \; \forall k \in [\ell],
\end{align*}
and it may actually be increasing since we include more and more languages as we move towards higher levels of the hierarchy.
Moreover, the last levels $\Languages_{m+2,\ell,k}$ satisfy 
$L_{\Languages_{m+2,\ell,k}} \geq \PrefL$ for every $k \in [\ell]$
since $\{x \Episodes^{\ell-t} \mid t \in [0,\ell],\, x \in \Episodes^t \} \subseteq \Languages_{m+2,\ell,k}$. Therefore, the metric $L_{\Languages_{m+2,\ell,k}}$ is at least as effective as $\PrefL$ in distinguishing candidate states. It can be much more effective as shown next.

%$\Languages_0 \subset \Languages_1 \subset \dots$.
%%
%We define the following hierarchy of languages.
%The base set contains the empty and the full languages $\Languages_0 \coloneqq \{\emptyset, \Alphabet^*\}$.
%The other sets are defined using the polynomial operator $\Polylang_i$ and the Boolean operators $\Languages^{\mathsf{c}}$, $\Languages^{\sqcup}$, and %$\Languages^{\sqcap}$.
%We group these three into a single operator $\Boollang(\Languages)$ that returns $\Languages^{\mathsf{c}} \cup \Languages^{\sqcup} \cup \Languages^{\sqcap}$.
%We can now define our hierarchy of languages as
%$\Languages_k \coloneqq \Boollang\Polylang_{k}(\Languages_0)$, for any $k \in \Naturals_+$.
%Any set $\Languages_k$ in this hierarchy induces an associated language metric $L_{\Languages_k}$.

\begin{example}[Language metric in T-maze]
  We now discuss the importance of our language metric for distinguishability,
  with respect to the T-maze described in Example~\ref{ex:tmaze} and the associated RDP in Example~\ref{ex:tmaze-rdp}.
  Consider the two states $(q_{\top 6}, 6)$ and $(q_{\bot 6}, 6)$ in the middle of the corridor and assume that the behavior policy is uniform.
  In these two states, the probability of each future sequence of actions, observations, and rewards is identical, except for the sequences that reach~$S$ or the top/bottom cells.
  These differ in the starting observation or the reward, respectively.
  Thus, the maximizer in the definition of $\PrefL$-distinguishability is the length $u=6$ and any string $x$ that contains the initial observation.
  Since the behavior policy is a random walk, the probability of~$x$ is $0.5^6$ from one state, and 0 from the other, depending on the observation.
  Therefore, $\PrefL$-distinguishability decreases exponentially with the length of the corridor.
  Consider instead the language $\Languages_{1,1,1}$ and its associated $L_{\Languages_{1,1,1}}$.
  This set of languages includes $\Alphabet^* \Actions ``110" \Rewards \Alphabet^* \cap \Alphabet^\ell$, which represents the language of any episode suffix containing the observation $110$.
  Since it does not depend on specific paths, its probability is 0 for $(q_{\bot 6}, 6)$ and it equals the probability of visiting the leftmost cell for $(q_{\top 6}, 6)$.
  For sufficiently long episodes, this probability approaches~1.
  Thus, in some domains, the language metric improves exponentially over $\PrefL$.
\end{example}

\begin{assumption}
    The behavior policy~$\Behav\Policy$ has an $L_{\Languages_{i,j,k}}$-distinguishability of at least~$\CDistinguish > 0$,
    where $\Languages_{i,j,k}$ is constructed as above and is an input to the algorithm.
\end{assumption}

Given $i\in[0,H]$, let $p, p' \in \Distribution{\Episodes_{i}}$ be two distributions over traces.
To have accurate estimates for the language metric over some~$\Languages_{i,j,k}$,
we instantiate two estimators, $\Est{p}$ and $\Est{p}'$, respectively built using datasets of episodes $\Set{C}$ and $\Set{C'}$, defined as the fraction of samples that belong to the language; 
that is, $\Est{p} \coloneqq \sum_{e \in \Set{C}} \Indicator(e \in \Languages_{i,j,k}) / |\Set{C}|$
and $\Est{p}' \coloneqq \sum_{e \in \Set{C}'} \Indicator(e \in \Languages_{i,j,k}) / |\Set{C}'|$.

\subsection{Analysis}

In this section, we show how to derive high-probability sample complexity bounds for \textsc{AdaCT-H} when Count-Min-Sketch (CMS) or the language metric $L_\Languages$ is used.
Intuitively, it is sufficient to show that the statistical test is correct with high probability; the remaining proof is identical to that of \citet{cipollone2023}.

\begin{theorem}
\label{thm:cms}
\textsc{AdaCT-H}$(\Dataset,\delta)$ returns a minimal RDP $\RDP$ with probability at least $1-4AOQ\delta$ when CMS is used to store the empirical probability distributions of episode suffixes, the statistical test is
\[
\PrefL(\Multiset_1,\Multiset_2)\geq\sqrt{8\log(16(ARO)^{H-t}/\delta)/\min(|\Multiset_1|,|\Multiset_2|)},
\]
and the size of the dataset is at least $|\Dataset|\geq\widetilde{\mathcal{O}}(\sqrt{H}/\Behav d_{\min} \mu_0)$, where $\Behav d_{\min} = \min_{t,q,ao}\Behav d_t(q,ao)$.
\end{theorem}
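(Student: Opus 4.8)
The plan is to exploit the reduction announced just before the statement: by \citet{cipollone2023}, \textsc{AdaCT-H} returns a minimal RDP on an event of probability at least $1-4AOQ\delta$ \emph{provided every statistical test it runs is decided correctly} --- no two candidates carrying the same distribution over suffixes are declared distinct, and no two candidates carrying $\CDistinguish$-distinguishable distributions are merged. The downstream construction (the recursive build-up of $\Qs_1,\dots,\Qs_H$ and $\TransitionFn$, and the $\varepsilon$-optimality argument) does not depend on how counts are represented, so it is enough to re-establish test correctness for the CMS-based test with its enlarged threshold and then quote \citet{cipollone2023} verbatim for the remainder.

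First I would isolate the two CMS guarantees. Summarise each candidate's multiset $\Multiset$ of episode suffixes by one CMS of width $w$ and depth $d$, where each suffix $x\in\Multiset$ feeds one increment per prefix, so the underlying vector has $v_e = \abs{\{x\in\Multiset : e \text{ a prefix of } x\}}$ and $\norm{v}_1 \le (H+1)\abs{\Multiset}$. Then deterministically $\widetilde v_e \ge v_e$, and with probability at least $1-2^{-d}$ also $\widetilde v_e \le v_e + (e/w)\norm{v}_1$, i.e.\ the sketch estimate of the empirical prefix probability satisfies $\hat p(e\,*) \le \widetilde p(e\,*) \le \hat p(e\,*) + \gamma_{\mathrm{CMS}}$ with $\gamma_{\mathrm{CMS}} := e(H+1)/w$. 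I would then choose $w$ large enough --- polynomial in $H$ and $\sqrt{\abs{\Dataset}}$ --- that $\gamma_{\mathrm{CMS}}$ is at most half the gap between the enlarged threshold $\sqrt{8\log(16(ARO)^{H-t}/\delta)/n}$ and the original \textsc{AdaCT-H} tolerance $\sqrt{2\log(8(ARO)^{H-t}/\delta)/n}$, and $d=\Theta(\log((ARO)^H AOQ/\delta))$ so that a union bound over all $\le 2(ARO)^{H-t}$ prefixes in each of the $O(AOQ)$ candidate comparisons keeps the aggregate sketch-failure mass inside the failure budget without changing the exponent $4AOQ$. The resulting sketch has $wd$ cells, polynomial in $H$ and $\sqrt{\abs{\Dataset}}$, hence exponentially smaller than the $\Theta((ARO)^H)$ cost of exact counting --- this is the advertised space improvement.

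On the intersection of this sketch event with the sampling-concentration event of \citet{cipollone2023}, correctness splits in two. If $qao$ and $q'a'o'$ induce the same suffix distribution, then for every prefix $e$, $\abs{\widetilde p_1(e\,*)-\widetilde p_2(e\,*)} \le \abs{\hat p_1(e\,*)-\hat p_2(e\,*)} + 2\gamma_{\mathrm{CMS}} \le \sqrt{2\log(8(ARO)^{H-t}/\delta)/n}+2\gamma_{\mathrm{CMS}}$, which by the choice of $w$ is below the enlarged threshold; taking the max over $e$, the test does not fire and the two candidates are correctly merged. If instead they are $\CDistinguish$-distinguishable, let $e$ be the witnessing prefix; combining $\widetilde p_i(e\,*)\ge \hat p_i(e\,*)$ with the high-probability upper bound $\widetilde p_i(e\,*)\le \hat p_i(e\,*)+\gamma_{\mathrm{CMS}}$ gives $\abs{\widetilde p_1(e\,*)-\widetilde p_2(e\,*)} \ge \abs{\hat p_1(e\,*)-\hat p_2(e\,*)} - \gamma_{\mathrm{CMS}} \ge \CDistinguish - O\!\left(\sqrt{\log((ARO)^{H-t}/\delta)/n}\right) - \gamma_{\mathrm{CMS}}$; the stated lower bound on $\abs{\Dataset}$, together with a Chernoff bound ensuring $n := \min(\abs{\Multiset_1},\abs{\Multiset_2}) = \Omega(\abs{\Dataset}\,\Behav d_{\min})$ with high probability, makes the threshold and every error term at most $\CDistinguish/3$, so $\PrefL$ of the sketch estimates exceeds the threshold, the test fires, and the two candidates are correctly kept distinct. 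Collecting all failure probabilities into $1-4AOQ\delta$ and invoking \citet{cipollone2023} for the rest of the construction completes the proof.

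I expect the conceptual crux --- rather than the mechanical bookkeeping --- to be the observation that the one-sided CMS error is \emph{not} automatically harmless: although $\widetilde p \ge \hat p$ always, the overestimate can inflate one candidate's prefix probability more than the other's and thereby \emph{shrink} the empirical $\PrefL$ gap, so the distinguishable direction genuinely needs the two-sided, high-probability CMS bound on the witnessing prefix, which is what forces $d$ to scale with $\log((ARO)^H/\delta)$ and drives the union bound. The remaining work is quantitative: fixing $(w,d)$ as explicit functions of $\abs{\Dataset}, H, A, R, O, Q, \delta$ so that $2\gamma_{\mathrm{CMS}}$ stays inside the slack built into the enlarged constants, the union-bounded sketch-failure mass nests inside $4AOQ\delta$, and $wd$ remains sub-exponential in $H$ --- the only mild friction being the extra factor of $H$ in $\norm{v}_1$ coming from each suffix touching $H+1$ prefix counters, which one must check is confined to the space parameter $w$ and never leaks into the sample-complexity condition, the latter depending only on the unchanged interplay of $\abs{\Dataset}$, $\Behav d_{\min}$, and $\CDistinguish$ inherited from \citet{cipollone2023}.
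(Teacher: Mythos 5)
Your proposal is correct and follows essentially the same route as the paper: bound the CMS point-query error, push it through the triangle inequality to show the enlarged test neither separates identical suffix distributions nor merges $\CDistinguish$-separated ones, lower-bound the multiset sizes via concentration of the candidate occupancies to obtain the dataset condition, and invoke \citet{cipollone2023} for the remaining construction and the minimality induction. The only difference is bookkeeping: you charge one CMS increment per prefix (so $\norm{v}_1\le (H+1)\abs{\Multiset}$ and the extra $H$ factor is absorbed into the sketch width), whereas the paper folds the same slack directly into its choice of the CMS accuracy parameter $\varepsilon=\sqrt{\log(2/\delta_c)/2\abs{\Multiset_i}}$ with $\delta_c=\delta/8(ARO)^{H-t}$, which is exactly why its test constants grow from $(2,8)$ to $(8,16)$.
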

The proof of Theorem~\ref{thm:cms} appears in Appendix~\ref{app:thms}.

\begin{theorem}
\label{thm:lang}
\textsc{AdaCT-H}$(\Dataset,\delta)$ returns a minimal RDP $\RDP$ with probability at least $1-4AOQ\delta$ when the statistical test is implemented using the language metric $L_\Languages$ and equals
\[
L_\Languages(\Multiset_1,\Multiset_2)\geq\sqrt{2\log(4|\Languages|/\delta)/\min(|\Multiset_1|,|\Multiset_2|)},
\]
and the size of the dataset is at least $|\Dataset|\geq\widetilde{\mathcal{O}}(1/\Behav d_{\min} \mu_0)$.
\end{theorem}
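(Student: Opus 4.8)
The plan is to reuse the analysis of \textsc{AdaCT-H} from \citet{cipollone2023} essentially verbatim: that analysis already shows that the reconstructed layers $\Qs_1,\dots,\Qs_H$ and transition function $\TransitionFn$ coincide with those of the minimal RDP, and it yields the $1-4AOQ\delta$ failure probability, \emph{conditioned on the event that every call to the statistical test returns the correct verdict}. So the only genuinely new argument needed is the two-sided correctness of the modified test: (a) whenever two candidates $q_1a_1o_1$, $q_2a_2o_2$ refine the same RDP state, the test with threshold $\sqrt{2\log(4|\Languages|/\delta)/\min(|\Multiset_1|,|\Multiset_2|)}$ must \emph{not} separate them; and (b) whenever they refine distinct states of the same layer $\Qs_t$, the test must separate them. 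Part (b) is where the multisets must be large enough, and this is exactly what the lower bound on $|\Dataset|$ is designed to guarantee.

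The first step is a uniform concentration bound for the estimator. Fix a layer $t$ and condition on the inductively-maintained event that every merge performed at layers below $t$ was correct. Then, conditionally on its cardinality, the multiset $\Multiset(qao)$ consists of i.i.d.\ suffixes drawn from $p\coloneqq\Pr(e_{t:H}\mid q_t=\TransitionFn(\hat{q},ao),\Behav\Policy)$, where $\hat{q}$ is the correctly identified predecessor state; this uses that $\Behav\Policy$ is regular, so the law of a suffix depends only on the current RDP state. Since $\Est{p}(\Language)=|\{e\in\Multiset(qao):e\in\Language\}|/|\Multiset(qao)|$ is the empirical mean of the Bernoulli variables $\Indicator(e\in\Language)$, Hoeffding's inequality gives $\Pr(|\Est{p}(\Language)-p(\Language)|\ge\gamma)\le 2e^{-2|\Multiset(qao)|\gamma^2}$ for each fixed $\Language\in\Languages$. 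Choosing $\gamma=\sqrt{\log(4|\Languages|/\delta)/(2n)}$ with $n=\min(|\Multiset_1|,|\Multiset_2|)$ and union-bounding over the at most $|\Languages|$ languages (and the two multisets being compared) shows that, with probability $\ge 1-\delta$, $\max_{\Language\in\Languages}|\Est{p}_i(\Language)-p_i(\Language)|<\gamma$ for $i=1,2$; the outer union over the $\mathcal{O}(AOQ)$ candidates then produces the $1-4AOQ\delta$ probability, exactly as in \citet{cipollone2023}. The decisive point, and the reason this improves on Theorem~\ref{thm:cms}, is that the union bound runs over the \emph{fixed, finite} family $\Languages$ rather than over the exponentially many individual suffixes in $\Episodes_{H-t}$, so the threshold carries $\log|\Languages|$ instead of $\log((ARO)^{H-t})$, and the $\sqrt{H}$ never enters the sample size.

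On that event, the triangle inequality applied inside the maximum over $\Language\in\Languages$ gives $|L_\Languages(\Multiset_1,\Multiset_2)-L_\Languages(p_1,p_2)|\le 2\gamma$, and $2\gamma$ equals the test threshold $\sqrt{2\log(4|\Languages|/\delta)/n}$. For (a): candidates refining the same RDP state have $p_1=p_2$, hence $L_\Languages(\Multiset_1,\Multiset_2)<2\gamma$ and the test does not fire. For (b): candidates refining distinct states of $\Qs_t$ have $L_\Languages(p_1,p_2)\ge\CDistinguish$ by the $L_\Languages$-distinguishability assumption, hence $L_\Languages(\Multiset_1,\Multiset_2)\ge\CDistinguish-2\gamma$, which exceeds the threshold $2\gamma$ as soon as $\gamma\le\CDistinguish/4$, i.e.\ as soon as $n\ge 8\log(4|\Languages|/\delta)/\CDistinguish^{2}$. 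It remains to guarantee this multiset size for every relevant candidate: conditioned on correct state identification, each of the $|\Dataset|$ episodes lands in $\Multiset(qao)$ independently with probability $\Behav d_t(q,ao)\ge\Behav d_{\min}$, so $|\Multiset(qao)|$ stochastically dominates $\Bin(|\Dataset|,\Behav d_{\min})$; a multiplicative Chernoff bound together with a union bound over the $\mathcal{O}(AOQ)$ candidates shows that all the relevant multisets meet the threshold on the desired event once $|\Dataset|$ is of order $\Behav d_{\min}^{-1}\CDistinguish^{-2}\log(4|\Languages|/\delta)$, which after absorbing the logarithmic factor into $\widetilde{\mathcal{O}}(\cdot)$ is the claimed sample size. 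Composing this with the reduction above finishes the proof.

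I expect the only delicate point to be the conditioning used in the concentration step: making rigorous that, on the filtration-level event that all lower-layer merges were correct, the samples populating a layer-$t$ candidate's multiset are genuinely i.i.d.\ from the state-conditioned suffix law, so that Hoeffding applies. This bookkeeping is already carried out in \citet{cipollone2023} for the $\PrefL$-based test, and replacing $\PrefL$ by $L_\Languages$ does not touch it; the substantive new content is simply that, because $\Languages$ is a fixed finite family, a union bound of size $|\Languages|$ (rather than one over an exponential set of suffixes) suffices, which is exactly what removes the horizon dependence from the threshold and hence from the sample complexity.
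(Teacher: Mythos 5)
Your handling of the statistical test itself is exactly the paper's: the appendix proves your two claims as a pair of lemmas (Hoeffding for each fixed $\Language\in\Languages$ with $\delta_\ell=\delta/2|\Languages|$, a union bound over the $2|\Languages|$ events, the threshold $\sqrt{2\log(4|\Languages|/\delta)/\min(|\Multiset_1|,|\Multiset_2|)}$, no separation when $p_1=p_2$, separation when $L_\Languages(p_1,p_2)\ge\CDistinguish$ provided $\min(|\Multiset_1|,|\Multiset_2|)\ge 8\log(4|\Languages|/\delta)/\CDistinguish^2$), and the surrounding reduction --- union bound over the at most $QAO$ candidates, layer-by-layer minimality induction, and the observation that the union runs over the fixed family $\Languages$ rather than over exponentially many suffixes, which removes the $\sqrt{H}$ --- is also the paper's, borrowed from \citet{cipollone2023}.

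The gap is in the final sample-size accounting. Your Chernoff argument on $|\Multiset(qao)|$ dominating $\Bin(|\Dataset|,\Behav d_{\min})$ converts the per-candidate requirement $n\ge 8\log(4|\Languages|/\delta)/\CDistinguish^2$ into $|\Dataset|\gtrsim \log(4|\Languages|/\delta)/(\Behav d_{\min}\CDistinguish^2)$, and you then assert that this ``is the claimed sample size'' after absorbing logarithms into $\widetilde{\mathcal{O}}(\cdot)$. It is not: the theorem claims $\widetilde{\mathcal{O}}(1/(\Behav d_{\min}\CDistinguish))$, and the extra factor $1/\CDistinguish$ is polynomial, not logarithmic, so as written you prove a strictly weaker statement and the discrepancy is glossed over incorrectly. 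The paper arrives at the linear dependence on $1/\CDistinguish$ by a different bookkeeping step inherited from Theorem~6 of \citet{cipollone2023}: it controls $\widehat p_t(qao)=|\Multiset(qao)|/N$ with an empirical Bernstein inequality, imposes that the deviation be at most $\Behav\COccupancy_t(q,ao)/2$, and then substitutes the lemma's bound on $M=|\Multiset(qao)|$ into that accuracy condition, so that the requirement on $N$ scales like $\sqrt{M\ell}/\Behav\COccupancy_t(q,ao)\sim 1/\CDistinguish$ rather than like $M/\Behav\COccupancy_t(q,ao)\sim 1/\CDistinguish^2$. To obtain the theorem as stated you would have to reproduce that Bernstein-based argument (and verify to your own satisfaction that it really does force $M\ge 8\log(4|\Languages|/\delta)/\CDistinguish^2$ for every candidate --- your direct route is the transparent one, but it lands on the $1/\CDistinguish^2$ rate); simply declaring the two bounds equal up to logarithmic factors is an error.
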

The proof of Theorem~\ref{thm:lang} also appears in Appendix~\ref{app:thms}.
Note that by definition, $\mu_0$ is the $L$-distinguishability of $\RDP$ for the chosen language set $\Languages$, which has to satisfy $\mu_0>0$ for \textsc{AdaCT-H} to successfully learn a minimal RDP.
Even though our sample complexity results are similar to those of previous work up to a factor $\sqrt{H}$, as discussed earlier, $\mu_0$ may be exponentially smaller for $L_\Languages$ than for $\PrefL$.
We remark that the analysis does not hold if CMS is used to store the empirical probability distribution of the language metric $L_\Languages$, since the languages in a set $\Languages$ may overlap, causing the probabilities to sum to a value significantly greater than 1.

\section{Experimental Results}

In this section we present the results of experiments with two versions of \textsc{AdaCT-H}: one that uses CMS to compactly store probability distributions on suffixes, and one that uses the restricted language family $\Languages_{1,1,1}$.
We compare against FlexFringe~\cite{baum2023ff}, a state-of-the-art algorithm for learning probabilistic deterministic finite automata, which includes RDPs as a special case.
To approximate episodic traces, we add a termination symbol to the end of each trace, but FlexFringe sometimes learns RDPs with cycles.
Moreover, FlexFringe uses a number of different heuristics that optimize performance, but these heuristics no longer preserve the high-probability guarantees.
Hence the automata output by FlexFringe are not always directly comparable to the RDPs output by \textsc{AdaCT-H}.

We perform experiments in five domains from the literature on POMDPs and RDPs: Corridor~\citep{ronca_2021_EfficientPACa}, T-maze~\citep{bakker2001rlandlstm}, Cookie~\cite{toroicarte2019learning}, Cheese~\cite{McCallum1996ReinforcementLW} and Mini-hall~\cite{littman}.
Appendix~\ref{app:exps} contains a detailed description of each domain, as well as example automata learned by the different algorithms.
Table~\ref{table:1} summarizes the results of the three algorithms in the five domains.
We see that \textsc{AdaCT-H} with CMS is significantly slower than FlexFringe, which makes sense since FlexFringe has been optimized for performance.
CMS compactly represents the probability distributions over suffixes, but \textsc{AdaCT-H} still has to iterate over all suffixes, which is exponential in $H$ for the $\PrefL$ distance.
As a result, CMS suffers from slow running times, and exceeds the alloted time budget of 1800 seconds in the Mini-hall domain.

On the other hand, \textsc{AdaCT-H} with the restricted language family $\Languages_{1,1,1}$ is faster than FlexFringe in all domains except T-maze, and outputs smaller automata than both FlexFringe and CMS in all domains except Mini-hall.
The number of languages of $\Languages_{1,1,1}$ is linear in the number of actions, observation symbols, and reward values, so the algorithm does not have to iterate over all suffixes, and the resulting RDPs better exploit the underlying structure of the domains.
In Corridor and Cookie, all algorithms learn automata that admit an optimal policy.
However, in T-maze, Cheese and Mini-hall, the RDPs learned by \textsc{AdaCT-H} with the restricted language family  $\Languages_{1,1,1}$ admit a policy that outperforms those of FlexFringe and CMS.
As mentioned, the heuristics used by FlexFringe are not optimized to preserve reward, which is likely the reason why the derived policies perform worse.

\begin{table} % 
  \centering
  \caption{Summary of the experiments. For each domain, $H$ is the horizon, and for each algorithm, $Q$ is the number of states of the learned automaton, $r$ is the average reward of the derived policy, and time is the running time in seconds of automaton learning.}
\begin{tabular}{l@{\hspace*{9pt}}l@{\hspace*{9pt}}l@{\hspace*{9pt}}l@{\hspace*{9pt}}l@{\hspace*{9pt}}l@{\hspace*{9pt}}l@{\hspace*{9pt}}l@{\hspace*{9pt}}l@{\hspace*{9pt}}l@{\hspace*{9pt}}l}
    \toprule
    \multicolumn{1}{c}{} & \multicolumn{1}{c}{} &\multicolumn{3}{c}{FlexFringe} & \multicolumn{3}{c}{CMS} & \multicolumn{3}{c}{Restricted Languages} 
                      \\
    \cmidrule(r){3-5} \cmidrule(r){6-8} \cmidrule(r){9-11}
      Name & $H$& $Q$ & $r$ & time & $Q$ & $r$ & time & $Q$ & $r$ & time  \\
    \midrule
    Corridor     & $5$ 	  & $11$ & $1.0$ & $0.03$ & $11$ & $1.0$ &$0.3$ &  $11$ & $1.0$ & $0.01$  \\
    T-maze & $5$ & $29$ & $0.0$ & $0.11$& $104$ & $4.0$& $10.1$ & $18$ & $4.0$ & $0.26$    \\
    Cookie     &$9$ &$220$ & $1.0$ & $0.36$ & $116$ & $1.0$ & $6.05$ & $91$ & $1.0$ & $0.08$\\
    Cheese   & $6$ & $669$ & $0.69 \pm .04$ & $19.28$ & $1158$ & $0.4 \pm .05$& $207.4$ & $326$ & $0.81 \pm .04$ & $2.23$  \\
    Mini-hall   & $15$ & $897$ & $0.33 \pm .04$ & $25.79$ & - &-&- & $5134$ & $0.91 \pm .03 $ & $23.9$ \\
    \bottomrule
  \end{tabular}
  \label{table:1}
\end{table}

\section{Conclusion}
In this paper, we propose two new approaches to offline RL for Regular Decision Processes and provide their respective theoretical analysis. We also improve upon existing algorithms for RDP learning, and propose a modified algorithm using Count-Min-Sketch with reduced memory complexity. We define a hierarchy of language families and introduce a language-restricted approach, removing the dependency on $\PrefL$-distinguishability parameters and compare the performance of our algorithms to FlexFringe, a state-of-the-art algorithm for learning probabilistic deterministic finite automata. Although CMS suffers from a large running time, the language-restricted approach offers smaller automata and optimal (or near optimal) policies, even on domains requiring long-term dependencies. Finally, as a future work, we plan to expand our approach to the online RDP learning setting. 

\bibliographystyle{abbrvnat}
\bibliography{bib}

%%%%%%%%%%%%%%%%%%%%%%%%%%%%%%%%%%%%%%%%%%%%%%%%%%%%%%%%%%%%
\newpage
\appendix

\section{Pseudocode of \textsc{AdaCT-H}}
\label{app:adacth}

\begin{function}
{\small
	%\caption{AdaCT--H($\Dataset$, $\delta$)}
	\SetKwFunction{function}{function}
	\SetKwProg{myalg}{}{}{}
  	\myalg{\function{\textsc{AdaCT--H}}}{
	%\label{alg:adact}  Use algorithm name \FAdaCT
	%
	\KwIn{Dataset $\Dataset$ containing $N$ traces in $\Episodes_H$, failure probability $0<\delta<1$}
	\KwOut{Set $\Qs$ of RDP states, transition function $\TransitionFn:\Qs \times \Actions\,\Observations \to \Qs$}
	\BlankLine
	$\Qs_0 \gets \{q_0\}$, $\Multiset(q_0) \gets \Dataset$ \tcp*{initial state}
	\For{$t = 0, \dots, H$}{
		$\Qs_{\mathsf{c},t+1} \gets \{qao \Given q\in\Qs_t, ao\in\Actions\,\Observations \}$ \tcp*{get candidate states}
		\lForEach{$qao\in\Qs_{\mathsf{c},t+1}$}{$\Multiset(qao) \gets \{e_{t+1:H} \Given aroe_{t+1:H}\in\Multiset(q)\}$ \hspace*{.25cm} \tcp*[h]{compute suffixes}}
		$q_{\mathsf{m}}a_{\mathsf{m}}o_{\mathsf{m}} \gets \arg\max_{qao\in\Qs_{\mathsf{c},t+1}} |\Multiset(qao)|$ \tcp*{most common candidate}
		$\Qs_{t+1} \gets \{q_{\mathsf{m}}a_{\mathsf{m}}o_{\mathsf{m}}\}$, $\TransitionFn(q_{\mathsf{m}},a_{\mathsf{m}}o_{\mathsf{m}})=q_{\mathsf{m}}a_{\mathsf{m}}o_{\mathsf{m}}$ \tcp*{promote candidate}
		$\Qs_{\mathsf{c},t+1} \gets \Qs_{\mathsf{c},t+1} \setminus \{q_{\mathsf{m}}a_{\mathsf{m}}o_{\mathsf{m}}\}$ \tcp*{remove from candidate states}
		\For{$qao\in\Qs_{\mathsf{c},t+1}$}{
			$\DSimilar \gets \{q'\in\Qs_{t+1} \Given $ not $\textsc{TestDistinct}(t, \Multiset(qao), \Multiset(q'), \delta)\}$ \hspace*{.01cm} \tcp*[h]{confidence test}
			
			\lIf{$\DSimilar = \emptyset$}{
				$\Qs_{t+1} \gets \Qs_{t+1}\cup\{qao\}$,
				$\TransitionFn(q,ao)=qao$ \hspace*{0.5cm} \tcp*[h]{promote candidate}
			}
   
			\lElse{
				$q' \gets $ element in $\DSimilar$,
				$\TransitionFn(q,ao)=q'$,
				$\Multiset(q') \gets \Multiset(q') \cup \Multiset(qao)$ \hspace*{-0.02cm} \tcp*[h]{merge states}
			}
		}
	}
	\Return{$\Qs_0\cup\cdots\cup\Qs_{H+1}$, $\TransitionFn$}
	}
	\setcounter{AlgoLine}{0}
	\function{\textsc{TestDistinct}}{\;
	\KwIn{Time $t$, two multisets $\Multiset_1$ and $\Multiset_2$ of traces in $(\Actions\Observations\times\Rewards)^{H-t+1}$, failure probability $0<\delta<1$}
	\KwOut{True if $\Multiset_1$ and $\Multiset_2$ are regarded as distinct, False otherwise}
	\BlankLine
	\Return $\PrefL(\Multiset_1,\Multiset_2)\geq\sqrt{2\log(8(ARO)^{H-t}/\delta)/\min(|\Multiset_1|,|\Multiset_2|)}$
	}
}
\end{function}

\section{Pseudometrics}
\label{app:pseudometric}

We provide the definition of pseudometric spaces.

\begin{definition}[Pseudometrics]
    Given a set $\mathcal K$ and a non-negative function $d: \cX \times \cX \to \Reals$, we call $(\cX, d)$ a pseudometric space with pseudometric $d$, if for all $x, y, z \in \cX$:
    \begin{itemize}
        \item[(i)] $d(x, x) = 0$,
        \item[(ii)] $d(x, y) = d(y, x)$,
        \item[(iii)] $d(x, y) + d(y, z) \geq d(x, z)$.
    \end{itemize}
\end{definition}

If it also holds that $d(x, y) = 0$ iff $x = y$, then $d$ is a metric. 

\section{Proof of theorems}
\label{app:thms}

In this appendix we prove Theorems~\ref{thm:cms} and~\ref{thm:lang} using a series of lemmas. We first describe how to implement \textsc{AdaCT-H} using CMS.

Given a finite set $\cZ$ and a probability distribution $q\in\Delta(\cZ)$, let $\widehat{q}\in\Delta(\cZ)$ be an empirical estimate of $q$ computed using $n$ samples.
CMS can store an estimate $\widetilde{q}$ of the empirical distribution $\widehat{q}$.
In this setting, the vector $v$ contains the empirical counts of each element of $\cZ$, which implies $\norm{v}_1=n$ and $\widehat{q}(z_i)=v_i/n$ for each $z_i\in\cZ$.
The following lemma shows how to bound the error between $\widetilde{q}$ and $\widehat{q}$.

\begin{lemma}
\label{lemma:cms}
Given a finite set $\cZ$, a probability distribution $q\in\Delta(\cZ)$, and an empirical estimate $\widehat{q}\in\Delta(\cZ)$ of $q$ obtained using $n$ samples, let $\widetilde{q}$ be the estimate of $\widehat{q}$ output by CMS with parameters $\delta_c$ and $\varepsilon$. With probability at least $1-|\cZ|\delta_c$ it holds that $\norm{\widetilde{q}-\widehat{q}}_\infty \leq \epsilon$. 
\end{lemma}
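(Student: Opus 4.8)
The plan is to reduce the claim to the standard point-query guarantee of Count-Min-Sketch and then union-bound over $\cZ$. Recall the setup stated just before the lemma: the underlying vector $v$ has $v_i$ equal to the number of the $n$ samples landing on $z_i$, so $\norm{v}_1 = n$ and $\widehat q(z_i) = v_i / n$. I would therefore first prove the per-element bound ``with probability at least $1-\delta_c$, $v_i \le \widetilde v_i \le v_i + \varepsilon n$'', and then divide by $n$ and take a union bound.

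For a fixed $z_i \in \cZ$ and a fixed row $j \in [d]$, write $C(j, h_j(i)) = v_i + X_{j,i}$ with $X_{j,i} = \sum_{i' \neq i} v_{i'} \Indicator(h_j(i') = h_j(i)) \ge 0$. The nonnegativity of $X_{j,i}$ immediately gives $\widetilde v_i = \min_j C(j, h_j(i)) \ge v_i$ deterministically, so the lower side $\widetilde q(z_i) \ge \widehat q(z_i)$ is free (this is the ``CMS never underestimates'' remark). For the upper tail, pairwise independence of the hash family gives $\Pr(h_j(i') = h_j(i)) \le 1/w$, hence $\EE[X_{j,i}] \le \norm{v}_1 / w = n/w \le \varepsilon n / e$ by the choice $w = \lceil e/\varepsilon \rceil$. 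Markov's inequality then yields $\Pr(X_{j,i} \ge \varepsilon n) \le 1/e$, and since the $d$ rows use independently drawn hash functions these events are independent across $j$, so $\Pr(\widetilde v_i > v_i + \varepsilon n) \le \Pr(\forall j:\, X_{j,i} \ge \varepsilon n) \le e^{-d} \le \delta_c$, using $d = \lceil \log(1/\delta_c) \rceil$ with the natural logarithm.

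Dividing by $n$, with probability at least $1 - \delta_c$ we have $\widehat q(z_i) \le \widetilde q(z_i) \le \widehat q(z_i) + \varepsilon$, i.e.\ $\abs{\widetilde q(z_i) - \widehat q(z_i)} \le \varepsilon$. A union bound over the $|\cZ|$ elements then gives that this holds simultaneously for all $i$ with probability at least $1 - |\cZ|\delta_c$, which is exactly $\norm{\widetilde q - \widehat q}_\infty \le \varepsilon$. There is no real obstacle here; the only points requiring care are (i) noting that the lower inequality is deterministic so all the failure probability is spent on the upper tail, (ii) that the $1/e$ collision factor comes from $w \ge e/\varepsilon$ and the $\delta_c$ from the $d$ independent rows, and (iii) reading off $\norm{v}_1 = n$ so that the classical additive error $\varepsilon \norm{v}_1$ becomes an additive $\varepsilon$ after normalization.
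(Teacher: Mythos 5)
Your proof is correct and takes essentially the same route as the paper: reduce to the CMS point-query guarantee $\widetilde{v}_i \leq v_i + \varepsilon\norm{v}_1$ holding with probability at least $1-\delta_c$, use $\norm{v}_1 = n$ to turn the additive error into $\varepsilon$ after normalization, note that the lower bound $\widetilde{q}(z_i)\geq\widehat{q}(z_i)$ is deterministic, and union-bound over the $|\cZ|$ elements. The only difference is that the paper invokes the point-query guarantee directly from \citet{cormodeM05cms}, whereas you re-derive it via pairwise independence, Markov's inequality, and independence of the $d$ rows --- a correct, self-contained reproduction of the cited argument.
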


\begin{proof}
\citet{cormodeM05cms} show that for a point query that returns an approximation $\widetilde{v}_i$ of $v_i$, with probability at least $1-\delta_c$ it holds that
\[
\widetilde{v}_i \leq v_i + \varepsilon \norm{v}_1.
\]
In our case, the estimated probability of an element $z_i\in\cZ$ equals $\widetilde{q}(z_i)=\widetilde{v}_i/n$, where $\widetilde{v}_i$ is the point query for $z_i$.
Using the result above, with probability at least $1-\delta_c$ we have
\[
\widetilde{q}(z_i) = \frac {\widetilde{v}_i} n \leq \frac {v_i} n + \frac {\varepsilon \norm{v}_1} n = \widehat{q}(z_i) + \varepsilon.
\]
Since CMS never underestimates a value, $\widehat{q}(z_i)\leq\widetilde{q}(z_i)$ trivially holds. Taking a union bound shows that the inequality above holds simultaneously for all $z_i\in\cZ$ with probability $1-|\cZ|\delta_c$.
\end{proof}

The following two lemmas are analogous to Lemmas 13 and 14 of \citet{cipollone2023}.
\begin{lemma}
\label{lemma:upper}
For $t\in[0,H]$, let $\Multiset_1$ and $\Multiset_2$ be multisets sampled from distributions $p_1$ and $p_2$ on $\Delta(\Episodes_{H-t})$.
Assume that we use CMS with parameters $\delta_c=\delta/8(AOR)^{H-t}$ and $\varepsilon=\sqrt{ \log(2/\delta_c) / 2|\Multiset_i| }$ to store an approximation $\widetilde{p}_i$ of the empirical estimate $\widehat{p}_i$ of $p_i$ due to $\Multiset_i$, $i\in[2]$.
If $p_1=p_2$, with probability at least $1-\delta$ the statistical test satisfies
\[
\PrefL(\widetilde{p}_1, \widetilde{p}_2) \leq \sqrt{ \frac {8\log(16(AOR)^{H-t}/\delta)} {\min(|\Multiset_1|,|\Multiset_2|)} }.
\]
\end{lemma}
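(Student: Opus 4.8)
The plan is to control $\PrefL(\widetilde p_1,\widetilde p_2)$ by a triangle-inequality chain that passes through the two empirical distributions $\widehat p_1,\widehat p_2$ and the common true distribution $p := p_1 = p_2$. Writing $n_{\min} := \min(|\Multiset_1|,|\Multiset_2|)$, I would start from
\[
\PrefL(\widetilde p_1,\widetilde p_2) \;\le\; \PrefL(\widetilde p_1,\widehat p_1) + \PrefL(\widehat p_1, p) + \PrefL(p, \widehat p_2) + \PrefL(\widehat p_2, \widetilde p_2),
\]
which is legitimate because $\PrefL(\cdot,\cdot)$ is a maximum of absolute differences of the linear functionals $q \mapsto q(e\,{*})$, so it obeys the triangle inequality even though the sketched estimates $\widetilde p_i$ need not be genuine probability distributions. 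The target is then to show that each of the four terms is at most $\sqrt{\log(2/\delta_c)/(2 n_{\min})}$ with high probability; since $\delta_c = \delta/(8(AOR)^{H-t})$ gives $\log(2/\delta_c) = \log(16(AOR)^{H-t}/\delta)$, summing the four contributions yields exactly $4\sqrt{\log(16(AOR)^{H-t}/\delta)/(2 n_{\min})} = \sqrt{8\log(16(AOR)^{H-t}/\delta)/n_{\min}}$, the claimed bound.

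For the two sketch-rounding terms I would invoke Lemma~\ref{lemma:cms}: for each $i$, with probability at least $1 - |\cZ|\delta_c$ the sketched estimate satisfies $\norm{\widetilde p_i - \widehat p_i}_\infty \le \varepsilon_i$, hence $\PrefL(\widetilde p_i,\widehat p_i) \le \varepsilon_i = \sqrt{\log(2/\delta_c)/(2|\Multiset_i|)} \le \sqrt{\log(2/\delta_c)/(2 n_{\min})}$. For the two sampling terms I would fix a prefix $e$, observe that $\widehat p_i(e\,{*})$ --- the empirical frequency of the event ``the drawn trace extends $e$'' --- is an average of $|\Multiset_i|$ i.i.d.\ Bernoulli$(p(e\,{*}))$ variables, apply Hoeffding to get $|\widehat p_i(e\,{*}) - p(e\,{*})| \le \sqrt{\log(2/\delta_c)/(2|\Multiset_i|)}$ with probability $\ge 1 - \delta_c$, and union bound over the $O((AOR)^{H-t})$ prefixes and over $i \in \{1,2\}$, which bounds $\PrefL(\widehat p_1,p)$ and $\PrefL(p,\widehat p_2)$ by $\sqrt{\log(2/\delta_c)/(2 n_{\min})}$ as well. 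A final union bound over these $O((AOR)^{H-t})$-many failure events --- calibrated exactly by the choice of $\delta_c$ --- caps the total failure probability at $\delta$. The remaining steps of Theorem~\ref{thm:cms} (passing from a correct statistical test to a correctly reconstructed minimal RDP, and then to an $\varepsilon$-optimal policy) are inherited unchanged from \citet{cipollone2023}.

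The main obstacle is not the probabilistic content, which is routine, but a modeling point hidden in the first step: for $\PrefL(\widetilde p_i,\widehat p_i) \le \varepsilon_i$ to hold, each prefix probability $\widetilde p_i(e\,{*})$ must be recovered from the sketch with additive error $\varepsilon_i$, which is only true if the quantities actually sketched are the prefix counts (each normalized by $|\Multiset_i|$), rather than full-suffix counts queried and then summed over all completions of $e$ --- the latter would inflate the error by the number of completions, which is exponential in $H-t$ for short prefixes. Getting this right is what dictates the particular way the sketch must be populated, and it is also, as the paper remarks after Theorem~\ref{thm:lang}, precisely the reason the argument fails for the overlapping language families $\Languages$, where the relevant sets are not disjoint and the sketched masses need not sum to one.
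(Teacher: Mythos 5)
Your proof follows essentially the same route as the paper's: the same triangle-inequality chain through $\widehat p_1,\widehat p_2$ and the common distribution $p_1=p_2$, Hoeffding plus a union bound over the exponentially many prefixes for the sampling terms, Lemma~\ref{lemma:cms} for the two sketch-rounding terms, and the same calibration $\log(2/\delta_c)=\log(16(AOR)^{H-t}/\delta)$ yielding $\sqrt{8\log(16(AOR)^{H-t}/\delta)/\min(|\Multiset_1|,|\Multiset_2|)}$. Your closing remark that the sketch must store (per-length) prefix counts directly, rather than summing point queries over all suffix completions, is a correct reading of what the paper's application of Lemma~\ref{lemma:cms} to prefix probabilities implicitly requires, and matches the paper's own caveat about why the argument does not extend to overlapping language families.
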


\begin{proof}
Hoeffding's inequality states that for each $i\in[2]$, with probability at least $1-\delta_c$ it holds for each $e\in\Episodes_u$, $u\leq H-t$, that
\[
\abs{ \widehat{p}_i(e) - p_i(e) } \leq \sqrt{ \frac {\log(2/\delta_c)} {2|\Multiset_i|} }.
\]
We can now use Lemma~\ref{lemma:cms} to upper bound the $\PrefL$ metric as
\begin{align*}
L_\Languages(\widetilde{p}_1, \widetilde{p}_2) &= \max_{u\in[0,H-t],e\in\Episodes_u} \abs{\widetilde{p}_1(e) - \widetilde{p}_2(e)}\\
 &\leq \max_{u\in[0,H-t],e\in\Episodes_u} ( \abs{\widetilde{p}_1(e) - \widehat{p}_1(e)} + \abs{\widehat{p}_1(e) - p_1(e)} + \abs{p_1(e) - p_2(e)}\\
  & \hspace*{2.3cm} + \abs{p_2(e) - \widehat{p}_2(e)} + \abs{\widehat{p}_2(e) - \widetilde{p}_2(e)} )\\
 &\leq \sqrt{ \frac {\log(2/\delta_c)} {2|\Multiset_1|} } + \sqrt{ \frac {\log(2/\delta_c)} {2|\Multiset_1|} } + \sqrt{ \frac {\log(2/\delta_c)} {2|\Multiset_2|} } + \sqrt{ \frac {\log(2/\delta_c)} {2|\Multiset_2|} } \leq \sqrt{ \frac {8\log(2/\delta_c)} {\min(|\Multiset_1|,|\Multiset_2|)} }.
\end{align*}
Taking a union bound implies that this holds with probability at least $1-8(AOR)^{H-t}\delta_c=1-\delta$.
\end{proof}

\begin{lemma}
\label{lemma:lower}
For $t\in[0,H]$, let $\Multiset_1$ and $\Multiset_2$ be multisets sampled from distributions $p_1$ and $p_2$ on $\Delta(\Episodes_{H-t})$.
Assume that we use CMS with parameters $\delta_c=\delta/8(AOR)^{H-t}$ and $\varepsilon=\sqrt{ \log(2/\delta_c) / 2|\Multiset_i| }$ to store an approximation $\widetilde{p}_i$ of the empirical estimate $\widehat{p}_i$ of $p_i$ due to $\Multiset_i$, $i\in[2]$.
If $p_1\neq p_2$ and $\min(|\Multiset_1|,|\Multiset_2|)\geq 32 \log (2/\delta_c)/\mu_0^2$, with probability at least $1-\delta$ the statistical test satisfies
\[
\PrefL(\widetilde{p}_1, \widetilde{p}_2) \geq \sqrt{ \frac {8\log(16(AOR)^{H-t}/\delta)} {\min(|\Multiset_1|,|\Multiset_2|)} }.
\]
\end{lemma}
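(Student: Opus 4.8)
The plan is to mirror the structure of Lemma~\ref{lemma:upper}, but now exploit the separation guaranteed by the $\PrefL$-distinguishability $\mu_0$. First I would fix the prefix $e \in \Episodes_u$ (with $u \le H-t$) that realizes the maximum in the definition of $\PrefL(p_1,p_2)$, so that $\abs{p_1(e\,*) - p_2(e\,*)} \ge \mu_0$; here $\mu_0$ is the $\PrefL$-distinguishability of $\RDP$ under the behavior policy, which is positive by assumption. Since $\PrefL(\widetilde p_1, \widetilde p_2) \ge \abs{\widetilde p_1(e\,*) - \widetilde p_2(e\,*)}$ for that particular $e$, it suffices to lower-bound this single term.

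Next I would apply the triangle inequality in the other direction:
\[
\abs{\widetilde p_1(e\,*) - \widetilde p_2(e\,*)} \ge \abs{p_1(e\,*) - p_2(e\,*)} - \sum_{i=1}^{2}\Bigl(\abs{\widetilde p_i(e\,*) - \widehat p_i(e\,*)} + \abs{\widehat p_i(e\,*) - p_i(e\,*)}\Bigr).
\]
By Hoeffding's inequality (as in Lemma~\ref{lemma:upper}), with probability at least $1-\delta_c$ each estimation error $\abs{\widehat p_i(e\,*) - p_i(e\,*)}$ is at most $\sqrt{\log(2/\delta_c)/(2|\Multiset_i|)}$, and by Lemma~\ref{lemma:cms} each CMS error $\abs{\widetilde p_i(e\,*) - \widehat p_i(e\,*)}$ is at most $\varepsilon = \sqrt{\log(2/\delta_c)/(2|\Multiset_i|)}$ with probability at least $1-\delta_c$ (absorbing the $|\cZ|$ factor of Lemma~\ref{lemma:cms} into the union bound over all $e$, exactly as in the previous lemma). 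Summing the four error terms gives a total slack of at most $4\sqrt{\log(2/\delta_c)/(2\min(|\Multiset_1|,|\Multiset_2|))} = \sqrt{8\log(2/\delta_c)/\min(|\Multiset_1|,|\Multiset_2|)}$. Hence
\[
\PrefL(\widetilde p_1, \widetilde p_2) \ge \mu_0 - \sqrt{\frac{8\log(2/\delta_c)}{\min(|\Multiset_1|,|\Multiset_2|)}}.
\]

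Finally I would invoke the sample-size hypothesis $\min(|\Multiset_1|,|\Multiset_2|) \ge 32\log(2/\delta_c)/\mu_0^2$, which forces $\sqrt{8\log(2/\delta_c)/\min(|\Multiset_1|,|\Multiset_2|)} \le \mu_0/2$, so the right-hand side is at least $\mu_0/2$. It then remains to check that $\mu_0/2$ dominates the test threshold $\sqrt{8\log(16(AOR)^{H-t}/\delta)/\min(|\Multiset_1|,|\Multiset_2|)}$; substituting $\delta_c = \delta/(8(AOR)^{H-t})$ one has $\log(2/\delta_c) = \log(16(AOR)^{H-t}/\delta)$, so the threshold equals $\sqrt{8\log(2/\delta_c)/\min(|\Multiset_1|,|\Multiset_2|)} \le \mu_0/2$ by the same sample-size bound, and the claimed inequality follows. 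A union bound over the $\le 2 \cdot 8(AOR)^{H-t}$ events (two for CMS, two for Hoeffding, each over all prefixes) gives the overall failure probability $\delta$. The only mildly delicate point is bookkeeping the constants so that the single threshold $\sqrt{8\log(16(AOR)^{H-t}/\delta)/\min(|\Multiset_1|,|\Multiset_2|)}$ simultaneously upper-bounds the CMS/Hoeffding slack \emph{and} is dominated by $\mu_0/2$; no step is genuinely hard, but the two uses of the sample-size hypothesis must be lined up carefully.
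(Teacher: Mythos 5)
Your proposal is correct and follows essentially the same argument as the paper: lower-bound the metric at a prefix witnessing the $\mu_0$-separation, subtract the two Hoeffding errors and two CMS errors (each $\sqrt{\log(2/\delta_c)/(2|\Multiset_i|)}$), and use the sample-size hypothesis twice — once to show the residual is at least $\mu_0/2$ and once to show $\mu_0/2$ dominates the test threshold, with $\log(2/\delta_c)=\log(16(AOR)^{H-t}/\delta)$ — finishing with the same union bound over prefixes. The only difference is cosmetic bookkeeping of the union-bound constant, which is handled with the same level of looseness as in the paper.
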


\begin{proof}
We can use Lemma~\ref{lemma:cms} to lower bound the $\PrefL$ metric as
\begin{align*}
L_\Languages(\widetilde{p}_1, \widetilde{p}_2) &= \max_{u\in[0,H-t],e\in\Episodes_u} \abs{\widetilde{p}_1(e) - \widetilde{p}_2(e)}\\
 &\geq \max_{u\in[0,H-t],e\in\Episodes_u} ( \abs{p_1(e) - p_2(e)} - \abs{\widetilde{p}_1(e) - \widehat{p}_1(e)} - \abs{\widehat{p}_1(e) - p_1(e)}\\
  & \hspace*{2.3cm} - \abs{p_2(e) - \widehat{p}_2(e)} - \abs{\widehat{p}_2(e) - \widetilde{p}_2(e)} )\\
 &\geq \mu_0 - \sqrt{ \frac {\log(2/\delta_c)} {2|\Multiset_1|} } - \sqrt{ \frac {\log(2/\delta_c)} {2|\Multiset_1|} } - \sqrt{ \frac {\log(2/\delta_c)} {2|\Multiset_2|} } - \sqrt{ \frac {\log(2/\delta_c)} {2|\Multiset_2|} }\\
  &\geq \mu_0 - \sqrt{ \frac {8\log(2/\delta_c)} {\min(|\Multiset_1|,|\Multiset_2|)} } \geq \mu_0 - \frac {\mu_0} 2 \geq \frac {\mu_0} 2 \geq \sqrt{ \frac {8\log(2/\delta_c)} {\min(|\Multiset_1|,|\Multiset_2|)} }.
\end{align*}
Taking a union bound implies that this holds with probability at least $1-8(AOR)^{H-t}\delta_c=1-\delta$.
\end{proof}

The remainder of the proof of Theorem~\ref{thm:cms} follows exactly the same steps as in the proof of \citet[Theorem 6]{cipollone2023}.
For completeness, we repeat the steps here.
The proof consists in choosing $N=|\Dataset|$ and $\delta$ such that the condition on $\min(|\Multiset_1|,|\Multiset_2|)$ in Lemma~\ref{lemma:lower} is true with high probability for each application of \textsc{TestDistinct}. Consider an iteration $t\in[0,H]$ of \textsc{AdaCT-H}.
For a candidate state $qao\in\Qs_{\mathsf{c},t+1}$, its associated probability is $\Behav\COccupancy_t(q,ao)$ with empirical estimate $\widehat{p}_t(qao)=|\Multiset(qao)| / N$, i.e.~the proportion of episodes in $\Dataset$ that are consistent with $qao$. We can apply an empirical Bernstein inequality to show that
\begin{align*}
\Pr \pa{ \left\lvert \widehat{p}_t(qao) - \Behav\COccupancy_t(q,ao) \right\rvert \geq \sqrt{\frac{2\widehat{p}_t(qao)\ell} N} + \frac {14\ell} {3N} = \frac { \sqrt{ 2M\ell } + 14\ell/3 } N } \leq \delta,
\end{align*}
where $M=|\Multiset(qao)|$, $\ell=\log(4/\delta)$, and $\delta$ is the failure probability of \textsc{AdaCT-H}. To obtain a bound on $M$ and $N$, assume that we can estimate $\Behav\COccupancy_t(q,ao)$ with  accuracy $\Behav\COccupancy_t(q,ao) / 2$, which yields
\begin{align}
\frac {\Behav\COccupancy_t(q,ao)} 2 &\geq \frac { \sqrt{ 2M\ell } + 14\ell/3 } N\label{eq:Nocc}\\
\widehat{p}_t(qao) &\geq \Behav\COccupancy_t(q,ao) - \frac { \sqrt{ 2M\ell } + 14\ell/3 } N \geq \Behav\COccupancy_t(q,ao) - \frac {\Behav\COccupancy_t(q,ao)} 2 = \frac {\Behav\COccupancy_t(q,ao)} 2.
\end{align}
Combining these two results, we obtain
\begin{align}
M &= N\widehat{p}_t(qao) \geq N\Behav\COccupancy_t(q,ao)/2 \geq \frac N {2N} \pa{ \sqrt{ 2M\ell } + 14\ell/3 } = \frac 1 2 \pa{ \sqrt{ 2M\ell } + 14\ell/3 }.\label{eq:NM}
\end{align}
Solving for $M$ yields $M\geq 4\ell$, which is subsumed by the bound on $M$ in Lemma~\ref{lemma:lower} since $\CDistinguish<1$.
Hence the bound on $M$ in Lemma~\ref{lemma:lower} is sufficient to ensure that we estimate $\Behav\COccupancy_t(q,ao)$ with accuracy $\Behav\COccupancy_t(q,ao) / 2$.
We can now insert the bound on $M$ from Lemma~\ref{lemma:lower} into \eqref{eq:Nocc} to obtain a bound on $N$:
\begin{align}\label{eq:N1}
N &\geq \frac {2 ( \sqrt{ 2M\ell } + 14\ell/3) } {\Behav\COccupancy_t(q,ao)} \geq \frac {2\ell} {\Behav\COccupancy_t(q,ao)} \pa{ \frac 8 \CDistinguish \sqrt{\frac {(H-t)\log(4ARO)} \ell + 1 } + \frac {14} 3 } \equiv N_1.
\end{align}
To simplify the bound, we can choose any value larger than $N_1$:
\begin{align}
N_1 &\leq \frac {2\ell} {\Behav\COccupancy_t(q,ao)} \pa{ \frac 8 \CDistinguish \sqrt{H\log(4ARO) + H\log(4ARO) } + \frac {14} {3\CDistinguish} \sqrt{H\log(4ARO)} }\nonumber\\
 &< \frac {32\ell} {\Behav\COccupancy_{\min}\,\CDistinguish} \sqrt{H\log(4ARO)} \equiv N_0,
\end{align}
where we have used $\Behav\COccupancy_t(q,ao)\geq \Behav\COccupancy_{\min}$, $\CDistinguish<1$, $\ell=\log 4 + \log(1/\delta)\geq 1$, $H\log(4ARO)\geq\log 4\geq 1$ and $8\sqrt{2} + 14/3 < \frac {32} 2$. Choosing $\delta=\delta_0/2QAO$, a union bound implies that accurately estimating $\Behav\COccupancy_t(q,ao)$ for each candidate state $qao$ and accurately estimating $p(e_{0:u}*)$ for each prefix in the multiset $\Multiset(qao)$ associated with $qao$ occurs with probability $1-2QAO\delta=1-\delta_0$, since there are at most $QAO$ candidate states. Ignoring logarithmic terms, this equals the bound in the theorem.

It remains to show that the resulting RDP is minimal. We show the result by induction. The base case is given by the set $\Qs_0$, which is clearly minimal since it only contains the initial state $q_0$. For $t\in[0,H]$, assume that the algorithm has learned a minimal RDP for sets $\Qs_0,\ldots,\Qs_t$. Let $\Qs_{t+1}$ be the set of states at layer $t+1$ of a minimal RDP. Each pair of histories that map to a state $q_{t+1}\in\Qs_{t+1}$ generate the same probability distribution over suffixes. Hence by Lemma~\ref{lemma:upper}, with high probability \textsc{TestDistinct}$(t,\Multiset(qao),\Multiset(q'a'o'),\delta)$ returns false for each pair of candidate states $qao$ and $q'a'o'$ that map to $q_{t+1}$. Consequently, the algorithm merges $qao$ and $q'a'o'$. On the other hand, by assumption, each pair of histories that map to different states of $\Qs_{t+1}$ have $\PrefL$-distinguishability $\CDistinguish$. Hence by Lemma~\ref{lemma:lower}, with high probability \textsc{TestDistinct}$(t,\Multiset(qao),\Multiset(q'a'o'),\delta)$ returns true for each pair of candidate states $qao$ and $q'a'o'$ that map to different states in $\Qs_{t+1}$. Consequently, the algorithm does not merge $qao$ and $q'a'o'$. It follows that with high probability, \FAdaCT will generate exactly the set $\Qs_{t+1}$, which is that of a minimal RDP.

The proof of Theorem~\ref{thm:lang} is achieved by proving two very similar lemmas.

\begin{lemma}
For $t\in[0,H]$, let $\Multiset_1$ and $\Multiset_2$ be multisets sampled from distributions $p_1$ and $p_2$ on $\Delta(\Episodes_{H-t})$, and let $\widehat{p}_1$ and $\widehat{p}_2$ be empirical estimates of $p_1$ and $p_2$ due to $\Multiset_1$ and $\Multiset_2$, respectively.
If $p_1=p_2$, with probability at least $1-\delta$ the statistical test satisfies
\[
L_\Languages(\widehat{p}_1, \widehat{p}_2) \leq \sqrt{ \frac {2\log(4|\Languages|/\delta)} {\min(|\Multiset_1|,|\Multiset_2|)} }.
\]
\end{lemma}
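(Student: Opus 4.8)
The plan is to mirror the structure of Lemma~\ref{lemma:upper}, but to replace the union bound over the exponentially many trace prefixes with a union bound over the finitely many languages in $\Languages$; this single substitution is exactly what removes the $(ARO)^{H-t}$ factor from inside the logarithm. The key starting observation is that for any fixed $\Language \in \Languages$, the estimator $\widehat{p}_i(\Language) = \sum_{x \in \Language}\widehat{p}_i(x)$ is nothing but the empirical average $\frac{1}{|\Multiset_i|}\sum_{e \in \Multiset_i}\Indicator(e \in \Language)$ of the $\{0,1\}$-valued indicator of membership in $\Language$, taken over the $|\Multiset_i|$ i.i.d.\ samples drawn from $p_i$. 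Hence it is an unbiased estimator of $p_i(\Language)$ built from bounded independent random variables, and its support size is irrelevant.

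Next I would apply Hoeffding's inequality: for each $\Language \in \Languages$ and each $i \in \{1,2\}$, $\Pr\!\left(|\widehat{p}_i(\Language) - p_i(\Language)| \ge s\right) \le 2\exp(-2|\Multiset_i|s^2)$. Choosing $s = \sqrt{\log(4|\Languages|/\delta)/(2\min(|\Multiset_1|,|\Multiset_2|))}$ and using $|\Multiset_i| \ge \min(|\Multiset_1|,|\Multiset_2|)$ makes each tail at most $\delta/(2|\Languages|)$, so a union bound over the $2|\Languages|$ events (one per language, per multiset) gives that, with probability at least $1-\delta$, the bound $|\widehat{p}_i(\Language) - p_i(\Language)| \le s$ holds simultaneously for all $\Language \in \Languages$ and both $i \in \{1,2\}$.

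On this event, using the hypothesis $p_1 = p_2$ and the triangle inequality, for every $\Language \in \Languages$ we get $|\widehat{p}_1(\Language) - \widehat{p}_2(\Language)| \le |\widehat{p}_1(\Language) - p_1(\Language)| + |p_2(\Language) - \widehat{p}_2(\Language)| \le 2s$. Taking the maximum over $\Language \in \Languages$ and recalling $L_\Languages(\widehat{p}_1,\widehat{p}_2) = \max_{\Language \in \Languages}|\widehat{p}_1(\Language) - \widehat{p}_2(\Language)|$ yields $L_\Languages(\widehat{p}_1,\widehat{p}_2) \le 2s = \sqrt{2\log(4|\Languages|/\delta)/\min(|\Multiset_1|,|\Multiset_2|)}$, which is the claimed bound. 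Note that this particular lemma does not even require $\Multiset_1$ and $\Multiset_2$ to be mutually independent, since each empirical estimate is controlled against its own true mean before $p_1 = p_2$ is invoked.

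I do not expect a genuine obstacle here: the argument is a textbook Hoeffding-plus-union-bound, and the only thing to watch is the bookkeeping of constants (the $2|\Languages|$ events, the $\delta/(2|\Languages|)$ budget per event, and the factor $2$ from the two triangle-inequality terms, which together turn $s$ into $2s$ and produce the $4|\Languages|$ inside the logarithm). The conceptually important point worth stating explicitly is that \emph{finiteness} of $\Languages$ is what makes the union bound cheap --- the estimator of a language probability is a single Bernoulli mean regardless of how many strings the language contains, so the cardinality of the trace space $\Episodes_{H-t}$ never enters, unlike in Lemma~\ref{lemma:upper}. A matching lower lemma, covering the case $p_1 \ne p_2$ via the $L_\Languages$-distinguishability assumption $\mu_0$, would then complete the proof of Theorem~\ref{thm:lang} in the same way that Lemma~\ref{lemma:lower} completes the proof of Theorem~\ref{thm:cms}.
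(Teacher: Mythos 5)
Your proof is correct and follows essentially the same route as the paper's: Hoeffding's inequality applied per language with a confidence budget of $\delta/(2|\Languages|)$, a union bound over the $2|\Languages|$ events, and the triangle inequality through $p_1=p_2$ to bound the maximum over $\Languages$. The only cosmetic difference is that you fold $\min(|\Multiset_1|,|\Multiset_2|)$ into the deviation bound from the start, while the paper keeps the two $|\Multiset_i|$ terms separate until the final step; the constants and conclusion agree.
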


\begin{proof}
Let $\delta_\ell=\delta/2|\Languages|$.
Hoeffding's inequality states that for each $\Language\in\Languages$ and each $i\in\{1,2\}$, with probability at least $1-\delta_\ell$ it holds that
\[
\abs{ \widehat{p}_i(\Language) - p_i(\Language) } \leq \sqrt{ \frac {\log(2/\delta_\ell)} {2|\Multiset_i|} }.
\]
We can now upper bound the language metric as
\begin{align*}
L_\Languages(\widehat{p}_1, \widehat{p}_2) &= \max_{\Language \in \Languages} \abs{\widehat{p}_1(\Language) - \widehat{p}_2(\Language)}\\
 &\leq \max_{\Language \in \Languages} ( \abs{\widehat{p}_1(\Language) - p_1(\Language)} + \abs{p_1(\Language) - p_2(\Language)} + \abs{p_2(\Language) - \widehat{p}_2(\Language)} )\\
 &\leq \sqrt{ \frac {\log(2/\delta_\ell)} {2|\Multiset_1|} } + 0 + \sqrt{ \frac {\log(2/\delta_\ell)} {2|\Multiset_2|} } \leq \sqrt{ \frac {2\log(2/\delta_\ell)} {\min(|\Multiset_1|,|\Multiset_2|)} }.
\end{align*}
Taking a union bound implies that this bound holds with probability at least $1-2|\Languages|\delta_\ell=1-\delta$.
\end{proof}

\begin{lemma}
For $t\in[0,H]$, let $\Multiset_1$ and $\Multiset_2$ be multisets sampled from distributions $p_1$ and $p_2$ on $\Delta(\Episodes_{H-t})$, and let $\widehat{p}_1$ and $\widehat{p}_2$ be empirical estimates of $p_1$ and $p_2$ due to $\Multiset_1$ and $\Multiset_2$, respectively.
If $p_1\neq p_2$ and $\min(|\Multiset_1|,|\Multiset_2|)\geq 8\log(4|\Languages|/\delta)/\mu_0^2$, with probability at least $1-\delta$ the statistical test satisfies
\[
L_\Languages(\widehat{p}_1, \widehat{p}_2) \geq \sqrt{ \frac {2\log(4|\Languages|/\delta)} {\min(|\Multiset_1|,|\Multiset_2|)} }.
\]
\end{lemma}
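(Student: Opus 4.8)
The plan is to mirror the structure of Lemma~\ref{lemma:lower}, but without the CMS overestimation term, since here the estimators $\widehat{p}_1,\widehat{p}_2$ are the exact empirical language frequencies $\widehat{p}_i(\Language)=|\{e\in\Multiset_i : e\in\Language\}|/|\Multiset_i|$. First I would set $\delta_\ell=\delta/(2|\Languages|)$ and apply Hoeffding's inequality: for a fixed $\Language\in\Languages$ and a fixed $i\in\{1,2\}$, the quantity $\widehat{p}_i(\Language)$ is an average of $|\Multiset_i|$ i.i.d.\ $[0,1]$-valued variables $\Indicator(e\in\Language)$ (the episodes are sampled independently and suffixes from a fixed RDP state under a regular policy are identically distributed), so with probability at least $1-\delta_\ell$ we have $|\widehat{p}_i(\Language)-p_i(\Language)|\le\sqrt{\log(2/\delta_\ell)/(2|\Multiset_i|)}$. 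A union bound over the $2|\Languages|$ pairs $(\Language,i)$ makes all of these hold simultaneously with probability at least $1-2|\Languages|\delta_\ell=1-\delta$; this is the only place randomness enters, and the remainder of the argument is deterministic on that event.

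Next I would single out a language $\Language^\star\in\argmax_{\Language\in\Languages}|p_1(\Language)-p_2(\Language)|$. Since in the intended application $p_1,p_2$ are the suffix distributions induced by two distinct RDP states under the behavior policy, the $L_\Languages$-distinguishability assumption gives $|p_1(\Language^\star)-p_2(\Language^\star)|=L_\Languages(p_1,p_2)\ge\mu_0$. Restricting the maximum defining $L_\Languages(\widehat{p}_1,\widehat{p}_2)$ to $\Language^\star$ and applying the reverse triangle inequality,
\[
L_\Languages(\widehat{p}_1,\widehat{p}_2)\ge|\widehat{p}_1(\Language^\star)-\widehat{p}_2(\Language^\star)|\ge\mu_0-|\widehat{p}_1(\Language^\star)-p_1(\Language^\star)|-|\widehat{p}_2(\Language^\star)-p_2(\Language^\star)|\ge\mu_0-\sqrt{\tfrac{2\log(2/\delta_\ell)}{\min(|\Multiset_1|,|\Multiset_2|)}},
\]
where the last step bounds each of the two Hoeffding terms by $\sqrt{\log(2/\delta_\ell)/(2\min(|\Multiset_1|,|\Multiset_2|))}$ and adds them.

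Finally I would invoke the sample-size hypothesis $\min(|\Multiset_1|,|\Multiset_2|)\ge 8\log(4|\Languages|/\delta)/\mu_0^2=8\log(2/\delta_\ell)/\mu_0^2$, which yields $\sqrt{2\log(2/\delta_\ell)/\min(|\Multiset_1|,|\Multiset_2|)}\le\mu_0/2$; hence $L_\Languages(\widehat{p}_1,\widehat{p}_2)\ge\mu_0-\mu_0/2=\mu_0/2\ge\sqrt{2\log(2/\delta_\ell)/\min(|\Multiset_1|,|\Multiset_2|)}$, and since $\log(2/\delta_\ell)=\log(4|\Languages|/\delta)$ this is exactly the claimed threshold. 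The computation is routine; the only genuine subtlety is the middle step, namely making sure the separation $\mu_0$ is actually attained by some $\Language\in\Languages$ — which is precisely what the $L_\Languages$-distinguishability assumption for the chosen family $\Languages_{i,j,k}$ provides — together with the observation that, in contrast to Lemma~\ref{lemma:lower}, no extra overestimation slack has to be subtracted here. This absence of slack is exactly what makes the sample complexity depend only on $\log|\Languages|$ (and not on the exponential-in-$H$ support size), and it is also why, as remarked in the paper, CMS cannot be layered on top of $L_\Languages$: overlapping languages would break the union-bound accounting.
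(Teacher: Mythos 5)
Your proposal is correct and follows essentially the same route as the paper's own proof: a Hoeffding bound per language and per multiset with $\delta_\ell=\delta/(2|\Languages|)$, a union bound over the $2|\Languages|$ events, the triangle-inequality lower bound via the distinguishability $\mu_0$ (the paper keeps the max over $\Languages$ where you single out a maximizer $\Language^\star$, which is equivalent), and the same use of the sample-size hypothesis to get $\mu_0/2$ above the test threshold. Your explicit remark that $\mu_0$ is attained because $p_1,p_2$ are suffix distributions of distinct states under the behavior policy is exactly the (implicit) step the paper relies on, so there is no gap.
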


\begin{proof}
We can lower bound the language metric as
\begin{align*}
L_\Languages(\widehat{p}_1, \widehat{p}_2) &= \max_{\Language \in \Languages} \abs{\widehat{p}_1(\Language) - \widehat{p}_2(\Language)}\\
 &\geq \max_{\Language \in \Languages} ( \abs{p_1(\Language) - p_2(\Language)} - \abs{\widehat{p}_1(\Language) - p_1(\Language)} - \abs{p_2(\Language) - \widehat{p}_2(\Language)} )\\
 &\geq \mu_0 - \sqrt{ \frac {\log(2/\delta_\ell)} {2|\Multiset_1|} } - \sqrt{ \frac {\log(2/\delta_\ell)} {2|\Multiset_2|} } \geq \mu_0 - \sqrt{ \frac {2\log(4|\Languages|/\delta)} {\min(|\Multiset_1|,|\Multiset_2|)} }\\
 &\geq \mu_0 - \frac {\mu_2} 2 \geq \frac {\mu_0} 2 \geq \sqrt{ \frac {2\log(4|\Languages|/\delta)} {\min(|\Multiset_1|,|\Multiset_2|)} }.
\end{align*}
Taking a union bound implies that this bound holds with probability at least $1-2|\Languages|\delta_\ell=1-\delta$.
\end{proof}
The remainder of the proof of Theorem~\ref{thm:lang} is analogous to that of Theorem~\ref{thm:cms}.
However, we no longer get a term $\sqrt{H}$ from $\sqrt{\log((AOR)^H)}$ unless the number of languages in $\Languages$ is exponential in $H$.

\section{Details of the experiments}
\label{app:exps}

In this appendix we describe each domain in detail and include examples of RDPs learned by \textsc{AdaCT-H}.

\subsection{Corridor}
This RDP example was introduced in  \citet{ronca_2021_EfficientPACa}. The environment consists of a $2\times m$ grid, with only two actions $a_0$ and $a_1$ which moves the agent to states $(0,i+1)$ and $(1,i+1)$ respectively from state $(\cdot, i)$. The goal of the agent is to avoid an enemy which is present in position $(0,i)$ with probability $p_i^0$, and at $(1,i)$ with probability $p_i^1$. The agent receives a reward of $+1$ for avoiding the enemy at a particular column, and the probabilities $p_i^0$ and $p_i^i$ are switched every time it encounters the enemy. When the agent reaches the last column, its position is reset to the first column. The observation space is given by $(i,j,e)$, where $i,j$ is the cell position of the agent and $e \in \{\textit{enemy}, \textit{clear} \}$ denotes the presence of the guard in the current cell. Fig \ref{fig:corridor} shows the minimal automaton obtained by all three algorithms for $H=5$.
\begin{figure}[h!]
  \centering
  \includegraphics[scale=0.28]{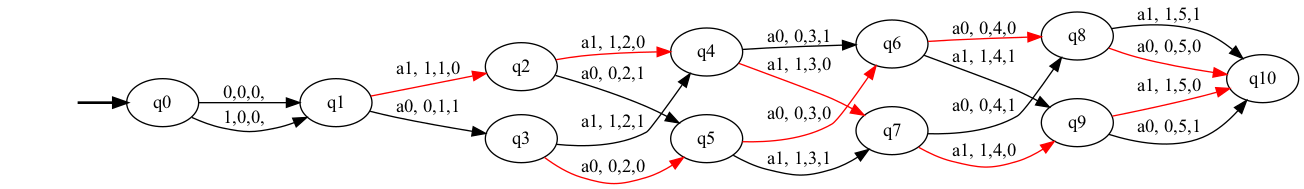}
  \caption{Automaton obtained from the corridor environment. The edges are labelled as [$\textit{action}$, $\textit{observation}$, $\textit{enemy}$].}
  \label{fig:corridor}
\end{figure}

\subsection{T-maze}
The T-maze environment was introduced by Bakker~\cite{bakker2001rlandlstm} to capture long term dependencies with RL-LSTMs. As shown in Figure \ref{fig:tmaze}, at the initial position $S$, the agent receives an observation $X$, depending on the position of the goal state $G$ in the last column. The agent can take four actions, $\textit{North}$, $\textit{South}$, $\textit{East}$ and $\textit{West}$. The agent receives a reward of $+4$ on taking the correct action at the T-junction, and $-1$ otherwise, terminating the episode. The agent also recieves a $-1$ reward for standing still. At the initial state the agent receives observation $011$ or $110$, $101$ throughout the corridor and $010$ at the T-junction. Fig \ref{fig:tmaze} shows the optimal automaton obtained when the available actions in the corridor are restricted to only $East$ (the automaton obtained without this restriction is shown in~Fig.~\ref{fig:tmaze_unres}). Table~\ref{table:1} shows our results with the unrestricted action space. Both our approaches find the optimal policy in this case, unlike Flex-Fringe which fails to capture this long term dependency. 
\begin{figure}[h!]
  \centering
  \includegraphics[scale=0.28]{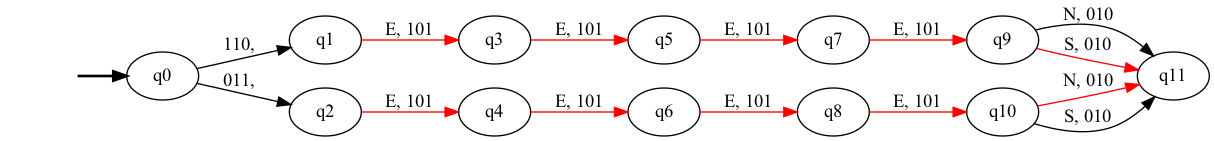}
  \caption{Automaton obtained from T-maze environment with restricted actions. The edges are labelled as [$\textit{action}$, $\textit{observation}$, $\textit{reward}$].}
  \label{fig:tmaze-restricted}
\end{figure}

% \subsection{Mini-hall}

\begin{figure}
     \centering
     \begin{subfigure}[b]{0.48\textwidth}
         \centering
         \includegraphics[scale=0.15]{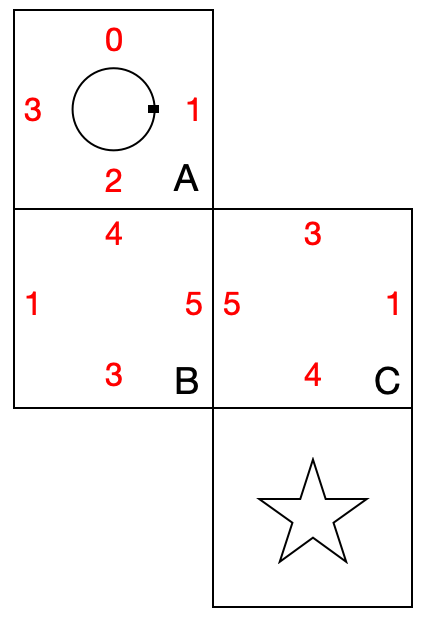}
         \caption{Mini-hall \cite{littman} environment.}
         \label{fig:minihall}
     \end{subfigure}
     \hfill
     \begin{subfigure}[b]{0.48\textwidth}
         \centering
         \includegraphics[scale=0.2]{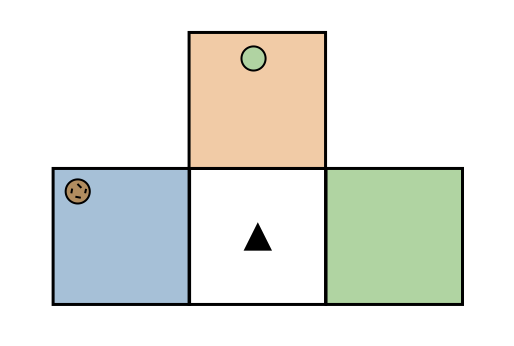}
        \caption{Simplified cookie domain.}
        \label{fig:cookie}
     \end{subfigure}
    \caption{Environments.}
    \label{fig:environments}
\end{figure}

% The mini-hall environment~\cite{littman} shown in Fig \ref{fig:minihall} has $12$ states, $4$ orientations in $3$ rooms, a goal state given by a star associated with a reward of $1$. The agent can take $3$ actions, $\textit{forward}$, $\textit{rotate}$ $\textit{left}$, $\textit{rotate}$ $\textit{right}$ with deterministic transition and observations. After reaching the goal state, any actions resets the agent position to any non-goal state with equal probability. There are $8$ possible observations (relative positions of the walls, and goal state). 
\begin{figure}[!h]
  \centering
  \includegraphics[scale=0.25]{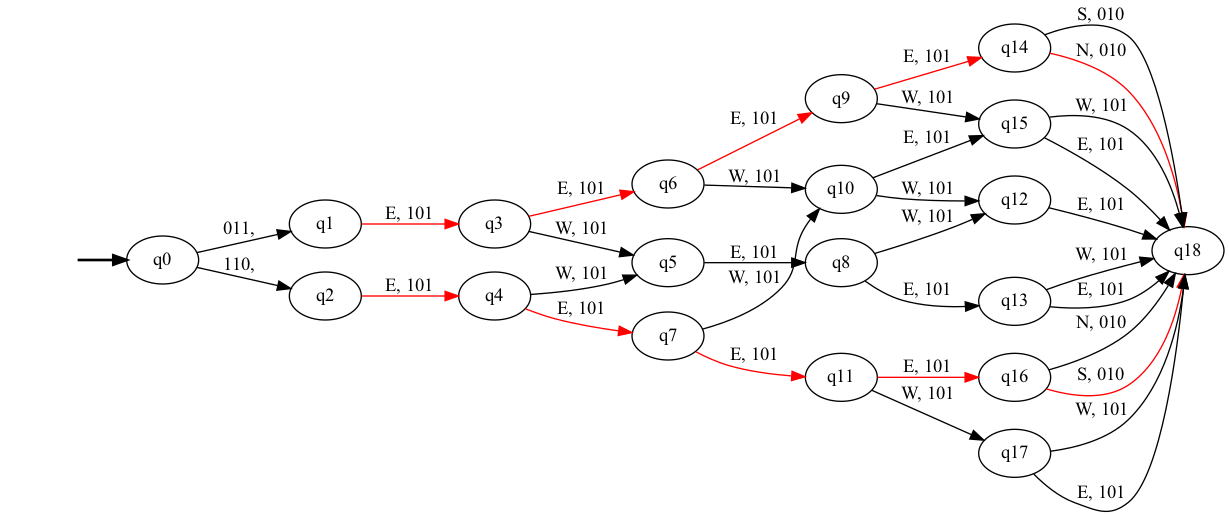}
  \caption{Automaton obtained from T-maze (partially restricted action space), restricted language.}
  \label{fig:tmaze_unres}
\end{figure}

\subsection{Cookie domain}

We modify the original \emph{cookie domain} as described in Icarte et al.~\cite{toroicarte2019learning}, to a simpler domain consisting of $4$ rooms, $\textit{blue}$, $\textit{white}$, $\textit{green}$ and $\textit{red}$ as shown in Fig.~\ref{fig:cookie}. If the agent presses the button in room $\textit{red}$, a cookie appears in room $\textit{blue}$ or $\textit{green}$ with equal probability. The agent can move $\textit{left}$, $\textit{right}$, $\textit{up}$ or $\textit{down}$, can $\textit{press}$ the button in room $\textit{red}$, and $\textit{eat}$ the cookie to receive a reward $1$, and then it may press the button again. There are $6$ possible observations ($4$ for each room, and $2$ for observing the $\textit{cookie}$ in the two rooms). We use the set $\Languages_{1,1,1}$  for distinguishability in the restricted language case. Our restricted language approach here finds the optimal policy and the smallest state space.

\subsection{Cheese maze}
Cheese maze~\cite{McCallum1996ReinforcementLW} consists of $10$ states, and $6$ observations, and $4$ actions. After reaching the goal state, the agent receives a reward of $+1$, and the position of the agent is reintialised to one of the non-goal states with equal probability. Our restricted language approach uses the set $\Languages_{1,1,1}$. For a horizon of $6$, the results for the restricted language and Flex-Fringe are comparable, however upon further increasing the horizon, Flex-Fringe outperforms \textsc{AdaCT-H} by learning cyclic RDPs which is not possible in our approach. 

\subsection{Mini-hall}
The mini-hall environment~\cite{littman} shown in Fig \ref{fig:minihall} has $12$ states, $4$ orientations in $3$ rooms, a goal state given by a star associated with a reward of $+1$, $6$ observation and $3$ actions, and the position of the agent is reset after the goal is reached. This setting is much more complex than the others because  $12$ states are mapped into $6$ observations; for example, starting from observation $3$, $3$ actions are required under the optimal policy to solve the problem if the starting underlying state was in Room B or C. We use the set $\Languages_{1,1,1}$ in our restricted language approach for distinguishability. Although we get a much larger state space, our algorithm gets closer to the optimal policy. However our CMS approach is not efficient in this case and exceeds the alloted time budget of $1800$ seconds, as it requires to iterate over the entire length of trajectories.

\end{document}